\documentclass[envcountresetchap]{svmult}

\usepackage{verbatim}
\usepackage{multirow}
\usepackage{amssymb,amsmath,amsfonts}
\usepackage{subfig}
\usepackage{graphicx,ctable,booktabs}
\usepackage{breqn}
\usepackage{enumerate}
\usepackage{array}
\usepackage{multicol}
\usepackage{algorithm}
\usepackage[noend]{algorithmic}
\usepackage{latexsym}
\usepackage{xspace}
\usepackage{wrapfig}

\newtheorem{assumption}{Assumption}
\newtheorem{observation}{Observation}

\newcommand{\algub}{\textnormal{UBASIC}\xspace}

\newcommand{\algup}{\textnormal{UPUMP}\xspace}
\newcommand{\Balgup}{\textnormal{\textbf{UPUMP}}\xspace}
\newcommand{\algkb}{\textnormal{KBASIC}\xspace}

\newcommand{\algkp}{\textnormal{KPUMP}\xspace}
\newcommand{\Balgkp}{\textnormal{\textbf{KPUMP}}\xspace}

\def\T{\mathcal{T}}
\def\P{\mathcal{P}}
\def\C{\mathcal{C}}
\def\H{\mathcal{H}}
\def\F{\mathcal{F}}
\def\U{\mathcal{U}}
\def\G{\mathcal{G}}
\def\V{\mathcal{V}}
\def\R{\mathcal{D}}
\def\L{\mathcal{L}}
\def\I{\mathcal{I}}
\def\E{\mathcal{E}}

\definecolor{dandelion}{RGB}{240, 225, 48}

\def\dS{\mathbb{S}}
\def\dT{\mathbb{T}}
\def\dC{\mathbb{C}}

\def\dG{\mathbb{G}}
\def\dV{\mathbb{V}}

\def\dN{\mathbb{N}}

\newcommand{\ie}{i.e.}

\newcommand{\sig}{\textnormal{sig}}

\newcommand{\sbs}{sampling-based\xspace}
\newcommand{\mr}{multi-robot\xspace}

\newcommand{\mrmp}{multi-robot motion planning\xspace}
\newcommand{\sr}{single-robot\xspace}

\newcommand{\confs}{configurations\xspace}
\newcommand{\congen}{connection generator\xspace}

\newcommand{\rsecafter}{\vspace{-0pt}}
\newcommand{\rsecbefore}{\vspace{-0pt}}
\newcommand{\rsubafter}{\vspace{-0pt}}
\newcommand{\rsubbefore}{\vspace{-0pt}}
\newcommand{\rimgafter}{\vspace{-0pt}}
\newcommand{\rimgbefore}{\vspace{-0pt}}
\newcommand{\rlemafter}{\vspace{-0pt}}
\newcommand{\rlembefore}{\vspace{-0pt}}


\title*{$k$-Color Multi-Robot Motion Planning\thanks{This work has been supported in part by the 7th Framework
Programme for Research of the European Commission, under
FET-Open grant number 255827 (CGL---Computational Geometry
Learning), by the Israel Science Foundation (grant no.
1102/11), and by the Hermann Minkowski--Minerva Center for
Geometry at Tel Aviv University.}}

\author{Kiril Solovey \and Dan Halperin}

\institute{
    School of Computer Science, Tel-Aviv University,
    \texttt{\{kirilsol,danha\}@post.tau.ac.il}
}

\titlerunning{$k$-Color Multi-Robot Motion Planning}

\date{\today}

\begin{document}
\maketitle

\renewcommand{\abstractname}{Abstract:} 

\abstract{
    We present a simple and natural extension of the \emph{multi-robot motion planning} problem where the robots are partitioned into groups (colors), such that in each group the robots are interchangeable.
    Every robot is no longer required to move to a specific target, but rather to some target placement that is assigned to its group.
    We call this problem \emph{$k$-color multi-robot motion planning} and provide a sampling-based algorithm specifically designed for solving it.
    At the heart of the algorithm is a novel technique where the $k$-color problem is reduced to several discrete \mrmp problems.
    These reductions amplify basic samples into massive collections of free placements and paths for the robots.
    We demonstrate the performance of the algorithm by an implementation for the case of disc robots and polygonal robots translating in the plane. We show that the algorithm successfully and efficiently copes with a variety of challenging scenarios, involving many robots, while a simplified version of this algorithm, that can be viewed as an extension of a prevalent sampling-based algorithm for the $k$-color case, fails even on simple scenarios.
    Interestingly, our algorithm outperforms a well established implementation of PRM for the standard multi-robot problem, in which each robot has a distinct color.
}


\section{Introduction}\rsecafter
    \emph{Motion planning} is a fundamental problem in robotics and has applications in different fields such as the study of protein folding, computer graphics, computer-aided design and manufacturing (CAD/CAM), and computer games.

    The problem of motion planning, in its most basic form, is to find a collision-free path for a robot from start to goal placements while moving in an environment cluttered with obstacles.

    An obvious extension of this problem is \emph{multi-robot motion planning}, where several robots share a workspace and have to avoid collision with obstacles as well as with fellow robots. In many situations it is natural to assume that some robots are identical, in form and in functionality, and therefore are indistinguishable. In this setting every target position should be occupied by some robot of a kind (and not necessarily by a specific robot). \begin{wrapfigure}{r}{0.4\textwidth}
    \vspace{-15pt}
  \begin{center}
    \includegraphics[width=0.35\textwidth]{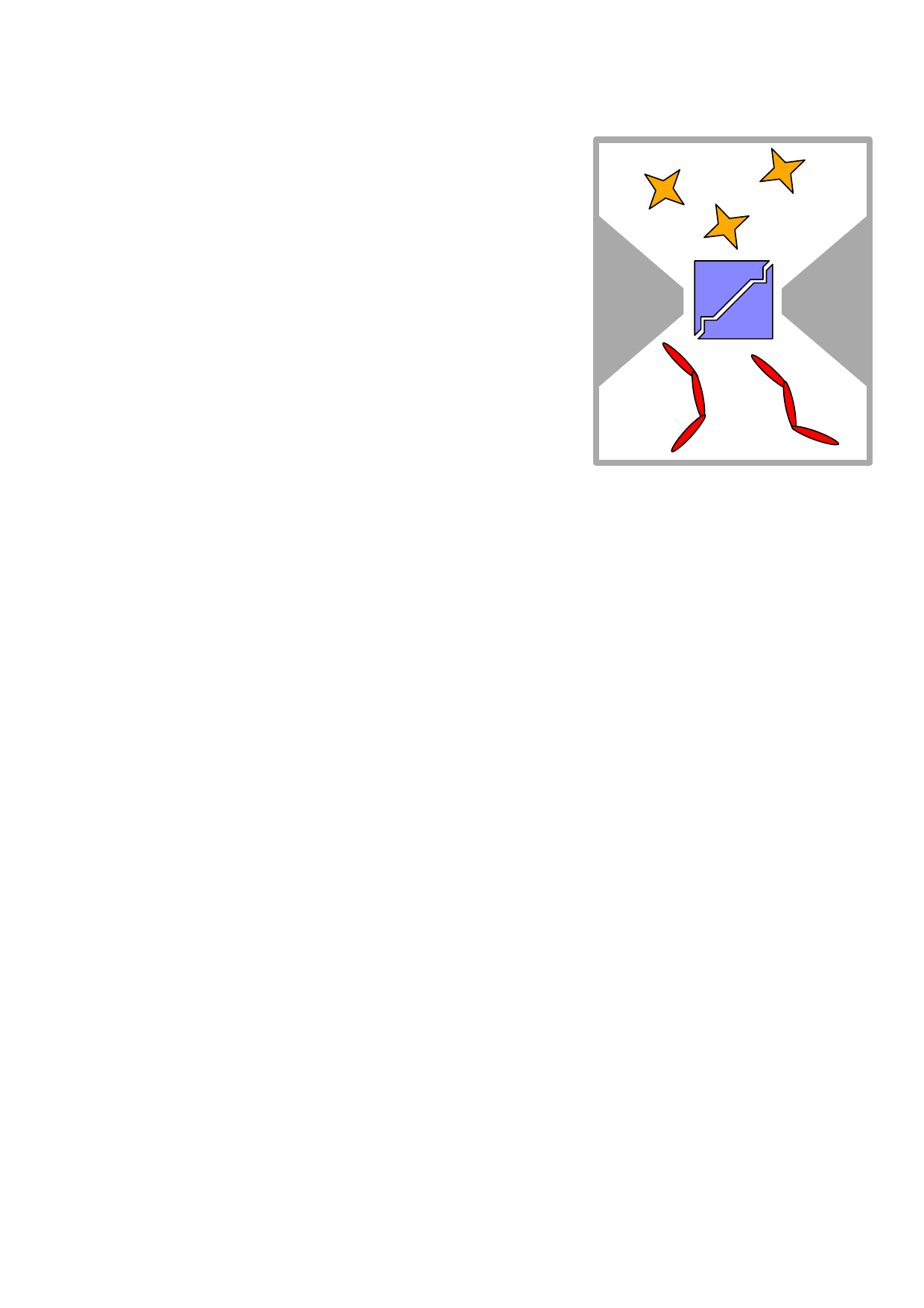}
  \end{center}
  \vspace{-10pt}
  \caption{An example of a $3$-color scenario where three different groups of robots occupy the same workspace. The star-shaped robots are required to exchange ``rooms'' with the snake robots while the two puzzle-like robots should return to their start positions in the end of the motion.}
  \vspace{-5pt}
\end{wrapfigure}

    \vspace{-10pt}We consider the problem of \emph{$k$-color multi-robot motion planning}---a simple and natural extension of the multi-robot problem where the robots are partitioned into $k$ groups (colors) such that within each group the robots are interchangeable. Every such group has a set of target positions, of size equal to the number of robots in that group. Every robot is no longer required to move to a specific target, but rather to some target position that is assigned to its group. However, we still require that all the target positions will be covered by the end of the motion of the robots. We term the special case where $k=1$ the \emph{unlabeled multi-robot motion planning} problem.

    As an example consider a fleet of mobile robots operating in a factory that are given the task of cleaning a set of specific locations. The robots are indistinguishable from one another, and therefore any robot can be assigned to any location. Now assume that in addition to the mobile robots, another class of maintenance robots is employed by the factory; again, we consider all the maintenance robots to be of the same kind and interchangeable for the given task. This turns the unlabeled problem into a $k$-color problem, where $k=2$ in this case. From now on we will refer to the classic multi-robot motion planning problem as \emph{fully-colored}, as it is a special case of the $k$-color problem where $k$ is equal to the number of robots and every group is of size one.
\rsubbefore
    \subsection{Previous Work}\rsubafter
        Throughout this paper we will assume some familiarity with the basic terms in the area of motion planning. For more background on motion planning, see, e.g., \cite{clhbkt-prmp,l-pa}.

        The first efforts in motion planning in general, and the multi-robot case in particular, were aimed towards the design of \emph{complete} algorithms, guaranteed to find a solution when one exists or report that none exists otherwise. Schwartz and Sharir were the first to give \cite{ss-pm3} a complete algorithm for a multi-robot problem, specifically dealing with the case of coordinating disc robots in the plane. The running time of their algorithm is exponential in the number of robots. A work by Hopcroft et al. \cite{hss-cmpmio} presented soon after suggested that in some cases the exponential running time may be unavoidable, showing that even the relatively simple setting of rectangular robots bound in a rectangular region is PSPACE-hard in the number of robots.

        The hardness of the multi-robot problem involving a large number of robots can be attributed to its high number of \emph{degrees of freedom} (or \emph{dofs})---the sum of the dofs of the individual robots. Some efforts were made in the direction of reducing the effective number of dofs. Aronov et al. \cite{avbsv-mpfmr} showed that for systems of two or three robots a path can be constructed, if one exists, where the robots move while maintaining contact, thus reducing the number of dofs by one or two, depending on the number of robots. van den Berg et al. \cite{bslm-cppmr} proposed a general scheme for decomposing a multi-robot problem into a sequence of subproblems, each consists of a subset of robots, where every subproblem can be solved separately and the results can be combined into a solution for the original problem. This method reduces the number of dofs that need to be treated simultaneously from the number of dofs of the entire problem to the number of dofs of the largest subproblem.

        An opposite approach to the complete planners is the \emph{decoupled} approach, trading completeness with efficiency. Decoupled algorithms solve separate subproblems (usually for individual robots) and combine the individual solutions into a global solution. Although this approach can be efficient in some cases, it does not guarantee finding a solution if one exists and usually works only for a restricted set of problems. An example of such an algorithm can be found in the work of van den Berg and Overmars \cite{bo-pmpmr} where every robot is given a priority and for each robot, the motion path is constructed to avoid collision with both static obstacles and lower-priority robots that are considered as moving obstacles. In other works, as in Leroy et al.~\cite{lls-mpcmr}, individual paths are computed and velocity tuning is performed to avoid collision between robots.

        In recent years, the \emph{sampling-based} approach for motion-planning problems has become increasingly popular due to its efficiency, simplicity and the fact that it is applicable to a wide range of problems.
        Unlike the complete planners that explicitly build the \emph{configuration space} of a given problem, the state of all possible configurations of a robot, sampling-based algorithms construct an implicit representation of a robot configuration space by sampling this space for valid robot placements and connecting nearby samples. The connections between samples form a \emph{roadmap} whose vertices describe valid placements for the robot and the edges represent valid paths from one placement to the other. Due to the implicit representation of the configuration space and their simplicity, sampling-based algorithms tend to be much faster than complete planners in practice, and are applicable to problems with a large number of dofs such as the multi-robot problem. Although these algorithms are not complete, many of them are \emph{probabilistically complete}, that is, they are guaranteed to find a solution, if one exists, given sufficient amount of time. Examples of such algorithms are the PRM algorithm~\cite{kslo-prm} by Kavraki et al. and the RRT algorithm~\cite{l-rert} by Kuffner and LaValle. Such algorithms can be easily extended to the multi-robot case by considering the fleet of robots as one large composite robot~\cite{sl-upp}.
        Several tailor-made sampling-based algorithms have been proposed for the multi-robot case~\cite{hh-hmp,so-cppmr}. For more information on sampling-based algorithms see, e.g., ~\cite{l-pa}.

        An abstract form of the multi-robot motion planning problem is the \emph{pebble motion on graphs problem}~\cite{kms-cpmg}. This is a general case of the famous \emph{15-puzzle} where pebbles occupying distinct vertices of a given graph are moved from one set of vertices to another, where the pebbles are bound to move on the edges of the graph. In~\cite{cdp-rgg} an unlabeled version of the pebble problem is discussed, as well as other variants, such as a grid topology of the graph. In~\cite{gh-mcpm} the feasibility of a $k$-color variant of the pebble problem on a general graphs is discussed. We also mention the work \cite{lb-eccmr} where an algorithm is given for a fairly general pebble problem. A recent work by Wagner et al.~\cite{wmc-ppp12} combines their technique for multi-agent pathfinding in discrete environments~\cite{wc-cmpp11} with an implicit representation of the roadmap, presented by \v{S}vestka and Overmars~\cite{so-cppmr}, to yield an efficient algorithm for the fully-colored multi-robot motion planning problem. In the Conclusion section we discuss more remote variants of multi-robot motion planning.

\rsubbefore
    \subsection{Contribution}\rsubafter
        In this paper we present a sampling-based algorithm for the $k$-color problem (for any $k$). This algorithm is aimed to solve the most general cases of this problem and does not make any assumptions regarding the workspace or the structure of the robots.

        Our algorithm for the $k$-color problem---the \algkp algorithm---reduces the $k$-color problem to several discrete pebble problems. Specifically, a sample generated by \algkp represents a local $k$-color problem that is embedded in a variant of the pebble motion problem. Those pebble problems are constructed in a manner that enables the algorithm to transform movements of pebbles into valid motions of the robots. This allows \algkp to generate a wide range of motions and placements for the robots with minimal investigation of the configuration space, thus reducing the dependence of the algorithm on costly geometric tools such as the collision detector.

        As reflected in the experiments reported below for the case of disc robots and polygonal robots translating in the plane, \algkp proves to be efficient, even on challenging scenes, and is able to solve problems involving a large number of robots using a modest number of samples. Interestingly, it performs well even on inputs of the standard (fully-colored) multi-robot problem.

        This algorithm is simple to implement and does not require special geometric components beyond single-robot local planners and single-robot collision detectors.
        We compare the performance of our algorithm with a simplified version of \algkp that can be considered as a variant of the PRM algorithm for the same problem. We note that the latter performs much slower than \algkp and fails to solve even problems that are considered to be simple for \algkp. Moreover, concentrating on the fully-colored case, \algkp outperforms a state-of-the-art implementation of the PRM algorithm. Our discussion will mainly focus on \algup---an algorithm for the unlabeled case, since its extension for the $k$-color case, namely \algkp, is almost straightforward. 
        The experiments though will demonstrate the power of \algkp for various values of $k$.

        The organization of the paper is as follows. In Section~\ref{secPreliminaries} we give formal definitions of the unlabeled and $k$-color problems. In Section~\ref{secPebble} we present a variant of the pebble problem and discuss its properties which will be exploited by our algorithms. In Section~\ref{secUnlabeled} we present \algup. In the following section (Section~\ref{sec:con}) we describe a subroutine that is used by \algup, which we call the \emph{connection generator}.  In Section~\ref{sec:kpump} we describe the changes that are necessary to transform \algup into \algkp. In section~\ref{sec:complete} the completeness of our algorithm is discussed.  We present experimental results for the case of disc robots and polygonal robots moving among polygonal obstacles in the plane in Section~\ref{sec:results} and discuss certain properties of our techniques in Section~\ref{sec:discussion}, as well as further work.

\rsecbefore
\section{Preliminaries and Terminology} \label{secPreliminaries}\rsecafter
    Let $r$ be a robot operating in the workspace $W$. We denote by $\F(r)$ the \emph{free space} of a robot $r$---the collection of all collision-free \emph{\sr configurations}.\footnote{We assume that $\F(r)$ is an open set. This is not critical in the algorithms below as we assume that
the robot never moves in contact with the obstacles.} Given $s,t\in \mathcal{F}(r)$, a \emph{path} for $r$ from $s$ to $t$ is a continuous function $\pi:[0,1]\rightarrow \mathcal{F}(r)$, such that $\pi(0)=s, \pi(1)=t$.\vspace{5pt}

    \noindent\textbf{Unlabeled Multi-Robot Motion Planning.}
        We say that two robots $r,r'$ are \emph{geometrically identical} if $\F(r) = \F(r')$ for the same workspace $W$. Let $R=\{r_1,\ldots,r_m\}$ be a set of $m$ geometrically identical robots, operating in a workspace $W$. We may use $\mathcal{F}$ to denote $\mathcal{F}(r_i)$ for any $1\leq i \leq m$. Let $C=\{c_1,\ldots,c_m|c_i\in \F\}$ be a set of $m$ single-robot configurations. $C$ is a \emph{configuration} if for every $c,c'\in C$, with $c\neq c'$, the robots $r,r'\in R$, placed in $c,c'$, do not collide.
        Notice that we reserve the unqualified term \emph{configuration} to refer to a set of $m$ collision-free single-robot configurations. Other types of configurations will be qualified: single-robot configurations and pumped configurations.

        Given two configurations $S=\{s_1,\ldots,s_m\},T=\{t_1,\ldots,t_m\}$, named \emph{start} and \emph{target}, respectively, we define $\mathcal{U}=(R,S,T)$ as the \emph{unlabeled problem}, which is shorthand for the {\it unlabeled multi-robot motion planning problem}. Our goal is to find an \emph{unlabeled path} $\pi_{\mathcal{U}}$, defined as follows. Firstly, $\pi_{\mathcal{U}}$ is a collection of $m$ paths $\{\pi_{1},\ldots,\pi_{m}\}$ such that for every $i$, $\pi_{i}$ is a collision-free path for the robot $r_i$ from $s_i$ to {\it some} $t\in T$. Secondly, the robots have to remain collision-free while moving on the respective paths, i.e., for every $\theta \in [0,1]$, $\pi_{\U}(\theta)=\{\pi_{1}(\theta),\ldots,\pi_{m}(\theta)\}$ is a configuration. Notice that this also implies that $\pi_{\U}(1)$ is some permutation of $T$.

        Throughout this paper, we use the notation $r(c)\subset\C$, for $c\in \F$, to represent the portion of the configuration space covered by a robot $r\in R$ placed in the \sr configuration $c$. Note that two robots from $R$ collide, when placed in $c,c'\in \F$, if $r(c)\cap r(c')\neq \emptyset$.
    \vspace{5pt}

    \noindent\textbf{$k$-Color Multi-Robot Motion Planning.}
        The $k$-color problem $\mathcal{L}$ is defined by the set of unlabeled problems $\{\mathcal{U}_1,\ldots,\mathcal{U}_k\}$, where $\U_i=(R_i,S_i,T_i)$ and $|R_i|=m_i$. The definition of the solution to this problem, namely a \emph{$k$-color path}, immediately follows. A special case of this problem, usually named simply \emph{multi-robot motion planning}, is a $k$-color problem where for every $\mathcal{U}_i$ it holds that $|R_i|=1$. In our context we call this special case \emph{fully-colored}.

\rsecbefore
\section{The Pebble Motion Problem}\label{secPebble}\rsecafter
    In preparation for the algorithm presented in the next section, we discuss a variant of the problem of \emph{pebble motion on graphs}. This problem is a discretization of the unlabeled problem. This discretization is defined in a manner that will allow us to transform local unlabeled problems into pebble problems such that a movement of the pebbles can be transformed back into valid robot motions. We explain below where our formulation is different from the standard presentation of the pebble-motion problem.

    \subsection{Formal Definition}
        A pebble problem~\cite{kms-cpmg} $\P(G,S,T,m)$ is defined by an undirected graph $G=(V,E)$, and two sets of vertices $S,T\subseteq V$, where $|S|=|T|=m$.
        A \emph{pebble placement} is an ordered set of $m$ distinct vertices of $V$.
        Initially, $m$ identical pebbles $\tau_1,\ldots,\tau_m$ are placed in $S$.
        We wish to find a chain of placements $\pi^*=P_1,\ldots,P_{\ell}$, called a \emph{pebble path}, which obeys the following set of rules. Firstly, we demand that $P_1 = S$.
        Secondly, for every two consecutive placements $P=\{p_1,\ldots,p_m\}, P'=\{p'_1,\ldots,p'_m\}$ and every $1\leq i \leq m$ it holds that $(p_i,p'_i)\in E$ or $p_i = p'_i$, \ie, the pebble $\tau_i$ is allowed to stay in its current vertex or move to a neighboring vertex in the graph.

        Next we depart from the problem definition in~\cite{kms-cpmg}. We demand that $P_{\ell}$ is some permutation of the elements of $T$. (The original formulation~\cite{kms-cpmg} specified which pebble will reside on which specific vertex of $T$.) We do, however, impose an additional requirement---the \emph{separation rule}---which requires that the pebbles will move separately, \ie, for every two consecutive placements $P,P'$, as defined above, exactly one pebble $\tau_i$ makes a move on an edge, while the other pebbles remain stationary. More formally, there exists $1\leq i\leq m$ such that $(p_i,p'_i)\in E$ and for every $j\neq i$ it holds that $p_j=p'_j$. The reason for this restriction will become clear later on.

    \subsection{Solvability}
        We provide a simple test to identify whether a given pebble problem has a solution. We start with a pair of basic definitions.

        \begin{definition}
            Let $V'$ be a pebble placement of a pebble problem $\P(G,S,T,m)$ and let $\{G_1,\ldots,G_h\}$ be the set of maximal connected subgraphs of $G$, where $G_i=(V_i,E_i)$. The \emph{signature} of $V'$ is defined as $\sig(G,V')=\{|V'\cap V_i|\}_{i=1}^h$.
        \end{definition}
        Namely, the signature of a placement is the number of pebbles in every connected component. Using this definition we define an equivalence relation between placements.

        \begin{definition}\label{def:equiv}
            Let $V',V''$ be two placements of $\P(G,S,T,m)$. We say that the two placements are equivalent if $\sig(G,V')=\sig(G,V'')$ and denote this property by $V'\equiv V''$.
        \end{definition}

        We note that this equivalence relation is defined between placements of the same graph. The variant of the pebble problem used in this paper possesses the following property, which states that there exists a pebble path between every two equivalent pebble placements. This property plays a cental role in the design of the \algup algorithm, presented in the next sections.
        \begin{lemma}\label{lem:pebble_property}\rlembefore
            For every pebble problem $\P(G,S,T,m)$ such that $S\equiv T$, there exists a pebble path from $S$ to $T$.
        \end{lemma}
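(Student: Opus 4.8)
The plan is to prove the lemma by an explicit construction that processes the vertices of $G$ one at a time, permanently locking each into its correct final state. First I would reduce to the case $G$ connected: a pebble never crosses between connected components, so a pebble path on $G$ is just a tuple of independent pebble paths, one per component, and the hypothesis $S\equiv T$, i.e.\ $\sig(G,S)=\sig(G,T)$, says exactly that each component $G_i=(V_i,E_i)$ satisfies $|S\cap V_i|=|T\cap V_i|$. Thus it suffices to solve $\P(G_i,S\cap V_i,T\cap V_i,|S\cap V_i|)$ for each $i$ and run the resulting paths consecutively, which respects the separation rule since exactly one pebble moves at each step of the concatenation. So from now on assume $G$ is connected with $n=|V|$.

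Fix a spanning tree $\Gamma$ of $G$ and strip its leaves one at a time to obtain an ordering $v_1,\dots,v_n$ of $V$ such that, writing $\Gamma_i$ for the subtree of $\Gamma$ induced on $\{v_1,\dots,v_i\}$, every $\Gamma_i$ is connected and, for $i\ge 2$, $v_i$ is a leaf of $\Gamma_i$ whose unique $\Gamma_i$-neighbour lies in $\Gamma_{i-1}$. Put $m_i:=|T\cap\{v_1,\dots,v_i\}|$, so $m_n=m$ and $m_i\le i$. I would process $i=n,n-1,\dots,2$ while maintaining the invariant: before step $i$, every $v_j$ with $j>i$ already satisfies ``$v_j$ occupied $\iff v_j\in T$'' and is never touched again, and all remaining pebbles lie in $\Gamma_i$ and number exactly $m_i$ (this count is forced by $|T|=m$). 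Step $i$ splits into four cases according to whether $v_i\in T$ and whether $v_i$ is currently occupied: if ($v_i\in T$, occupied) or ($v_i\notin T$, empty), do nothing; if ($v_i\in T$, empty), slide a pebble onto $v_i$; if ($v_i\notin T$, occupied), slide that pebble off $v_i$ into $\Gamma_{i-1}$. After step $2$, $\Gamma_1=\{v_1\}$ holds $m_1\in\{0,1\}$ pebbles, so $v_1$ too has the correct status; hence the occupied set equals $T$, which is a legal terminal placement since the pebbles are interchangeable. Concatenating all the moves yields the desired pebble path.

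The two ``slides'' are where the work is, and the second is the main obstacle. For ($v_i\in T$, empty): since $m_i\ge 1$, $\Gamma_i$ contains a pebble; take an occupied vertex $q\in\Gamma_i$ nearest to $v_i$ in $\Gamma_i$. By minimality the interior of the $\Gamma_i$-path from $v_i$ to $q$ is empty, so the pebble at $q$ can walk along this path into $v_i$, stepping onto one empty adjacent vertex at a time. For the delicate case ($v_i\notin T$, occupied), a counting argument supplies the needed room: $\Gamma_{i-1}$ has $i-1$ vertices and currently holds $m_i-1=m_{i-1}-1$ pebbles, while $m_{i-1}\le i-1$, so $\Gamma_{i-1}$ contains an empty vertex $w$. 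Take the $\Gamma_i$-path $v_i=w_0,w_1,\dots,w_\ell=w$ (note $w_1,\dots,w_\ell\in\Gamma_{i-1}$), let $j$ be the least index with $w_j$ empty, and perform the shift $w_{j-1}\to w_j,\ w_{j-2}\to w_{j-1},\dots,w_0\to w_1$; this empties $v_i$, pushes exactly one pebble into $\Gamma_{i-1}$, and leaves every other pebble inside $\Gamma_i$, so the invariant survives. Since every elementary move is a single pebble stepping onto an empty neighbour, placements stay sets of distinct vertices and the separation rule holds throughout. The crux is exactly this bookkeeping: the spanning-tree leaf order together with $|S|=|T|$ guarantees that whenever a pebble must be evicted from a non-target vertex, a free vertex is available in the still-unprocessed part of the tree to receive it.
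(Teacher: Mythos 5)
Your proof is correct and takes essentially the same route as the paper: the paper's proof just invokes the spanning-tree leaf-elimination algorithm of its cited reference for the connected case (with the component-wise reduction implicit in $S\equiv T$), and your construction is a fully detailed write-up of exactly that algorithm. The extra bookkeeping you supply (the leaf-stripping order, the counting argument producing a free vertex for eviction, and the check of the separation rule) fills in details the paper leaves to the citation, but it is not a different approach.
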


        This lemma is a generalization of ~\cite[Section 3, first Lemma]{k-cpmg} where an algorithm for the case of a connected graph is given. We mention that this algorithm constructs a spanning tree of $G$ and restricts the movements of the pebbles to the edges of the tree.

        From now on, we will refer to the algorithm that solves the pebble problem as \emph{pebble solver}, which given a pebble problem returns a pebble path.

\rsecbefore
\section{The Unlabeled Case: Pumped Configurations} \label{secUnlabeled}\rsecafter
    In this section, we present our main contribution --- a sampling-based algorithm for the unlabeled problem. At a high level the \algup algorithm bears some resemblance to the PRM algorithm, as they both generate a collection of samples that form a roadmap. However, the structures and subroutines used by \algup are quite different from the ones used by PRM. Thus, we describe \algup independently from PRM.

    \algup generates a collection of geometrically-embedded graphs. These are called \emph{pebble graphs} and enable the mapping of valid movements of pebbles from one pebble placement to the other on these graphs, into motions of robots between configurations. The vertices of such pebble graphs are \sr configurations while the edges represent \sr paths. We generate a pebble graph by sampling a set of single-robot configurations, called \emph{pumped configurations}, of size larger than the actual number of robots, to seemingly accommodate an increased number of robots.

    This technique makes use of the fact that our problem does not involve one complex robot, but rather a collection of robots operating in the same configuration space. This is in contrast with a popular \sbs technique that considers the group of robots as one \emph{composite robot}. In our opinion, the latter suffers from an acute disadvantage compared to our technique. We will demonstrate this claim experimentally and discuss the benefits of \algup and \algkp in depth later on.

    After discussing the construction of pebble graphs and exploring their various properties we show that they can be connected to generate more complex paths where the robots not only move within a single pebble graph but also between different pebble graphs on collision-free paths. We conclude this section with a description of the sampling-based algorithm.

\rsubbefore
    \subsection{Construction of Pebble Graphs}
    \label{subsec:constrion_pebble}\rsubafter
        We now define more formally some of the aforementioned structures. Recall that a configuration is a collection of $m$ \sr configurations, where $m$ is the actual number of robots, i.e., $|R|=m$, for which the $m$ robots are collision-free.

        \rlembefore\begin{definition}
            Let $V=\{v_1,\ldots,v_{n}\}$ for $n\geq m$ be a set of single-robot configurations such that for every $v\in V$ it holds that $v\in \F$, where $\F=\F(r)$ for some $r\in R$. $V$ is a \emph{pumped configuration} if for every $v,v'\in V$ two robots placed in these \sr configurations do not collide.
        \end{definition}\rlemafter

        Note that this implies that every subset of size $m$ of a pumped configuration is a configuration, i.e., if $C\subset V, |C|=m$ then $C$ is a configuration. Additionally, it follows that a pumped configuration can accommodate an increased number of robots, $n$ to be exact. A possible implementation of a procedure that generates a pumped configuration is given in Algorithm~\ref{alg:pumped_configuration}.

        Given a pumped configuration $V$ we construct the graph $G=(V,E)$ where the edges represent paths in $\F$ for individual robots. We call it a \emph{pebble graph}, and view it as embedded in the free configuration space. To generate the edges of $G$, and the respective paths, we utilize the \emph{edge planner} mechanism that is described below. This, in turn, relies on the \emph{local planner} component, that traditionally attempts to connect two \sr configurations with a straight-line path, although a more sophisticated technique can be used.

        Let $v,v'\in V$ be two distinct \sr configurations of the pumped configuration $V$, and let $\pi$ be a path for $r\in R$ from $v$ to $v'$ that was generated by the local planner. If for every $u\in V$, where $u\neq v,v'$, the robot $r$, while moving on $\pi$, does not collide with a (geometrically identical) robot placed in $u$, then the \emph{edge planner} returns $\pi$. Otherwise, it reports failure. Thus, the edge planner returns only paths for which a robot moving between two \sr configurations does not collide with other stationary robots placed in other \sr configuration of $V$.

        To construct a pebble graph from a pumped configuration $V$, the edge planner is applied on every pair $v\neq v'$ in $V$. Upon successful generation of a path $\pi_{v,v'}$ the edge $(v,v')$ is added to $G$. An example of a pumped configuration, as well as its underlying graph, are given in Figure~\ref{fig:cs}. A formal definition of the pebble graph is given constructively in Algorithm~\ref{alg:geometric_graph}.

        \begin{figure}[th]
            \vspace{-10pt}
            \rimgbefore
            \centering
            \setlength{\tabcolsep}{3pt}
            \begin{tabular}{cc}
                \subfloat{\includegraphics[width=0.35\columnwidth]{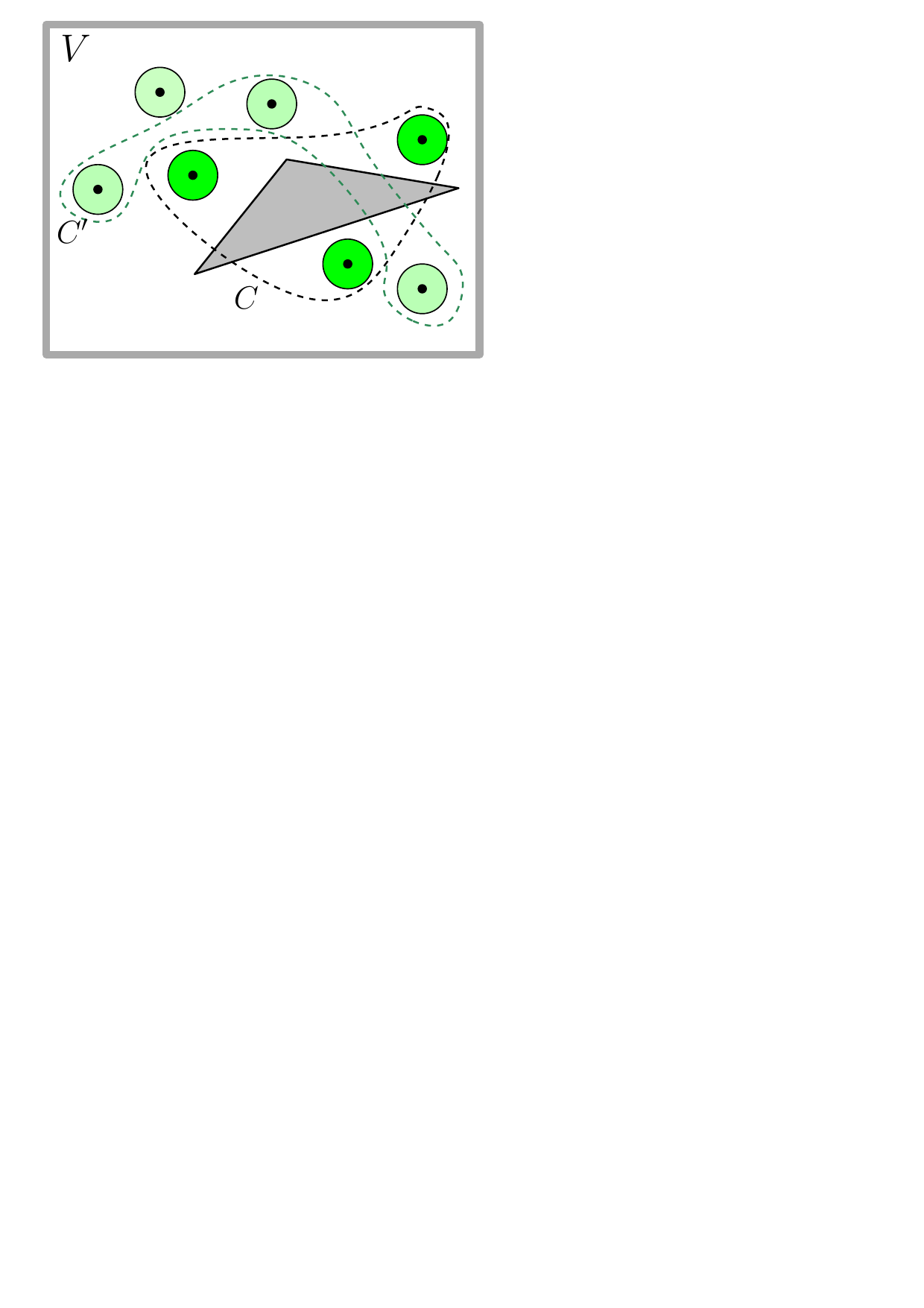}} &
                \subfloat{\includegraphics[width=0.35\columnwidth]{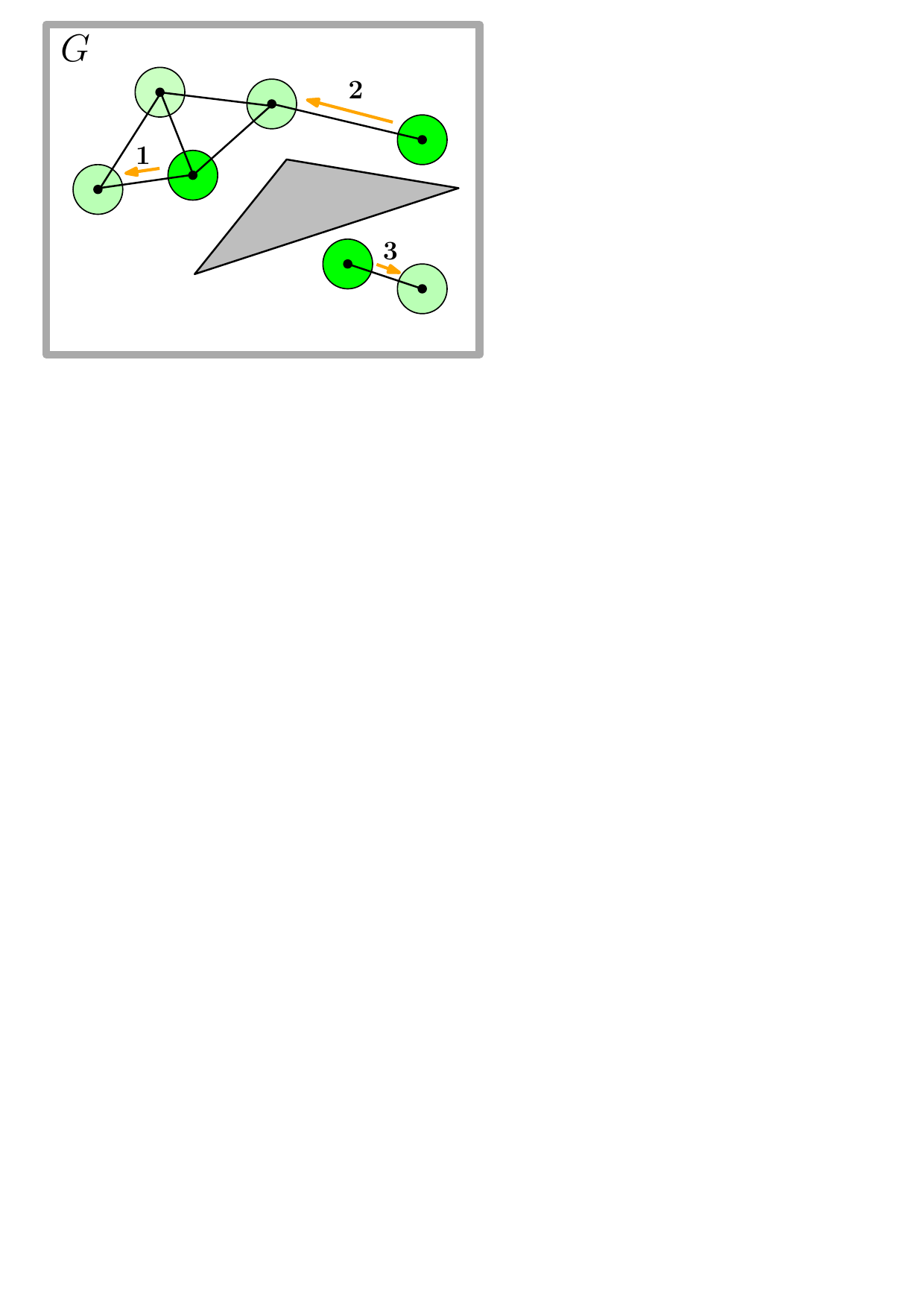}} \\
                (a)&(b)
            \end{tabular}
            \vspace{-5pt}
            \caption{(a) Pumped configuration $V$ with $m=3, n=7$, for the problem of disc robots in the plane. $C,C'$ are two configurations such that $C,C'\subset V$. (b) The pebble graph $G$ is induced by $V$ using an edge planner that tries to connect pairs of \sr configurations with a straight-line path. In addition, a path induced by a pebble path, from $C$ to $C'$, is described, where the arrows describe the movements of the robots from one single-robot configuration to its neighbor, and the numbers indicate the order in which those movements occur.}
            \label{fig:cs}
        \end{figure}

        \begin{algorithm}[!ht]
            \caption{PUMPED\_CONFIGURATION($n$)}
            \label{alg:pumped_configuration}
            \begin{algorithmic}[1]
                        \STATE  $V\leftarrow\emptyset$
                        \WHILE  {$|V|\not =n$}
                            \STATE  $v\leftarrow$ RANDOM\_SAMPLE()
                            \STATE  $\text{valid}\leftarrow\text{TRUE}$
                            \FORALL  {$v'\in V,\; v\neq v'$}
                                \IF {$r(v)\cap r(v')\neq \emptyset$}
                                    \STATE $\text{valid} \leftarrow \text{FALSE}$
                                \ENDIF
                            \ENDFOR
                            \IF {valid}
                                \STATE $V\leftarrow V\cup\{v\}$
                            \ENDIF
                        \ENDWHILE
                        \RETURN $V$
            \end{algorithmic}
        \end{algorithm}

        \begin{algorithm}[!ht]
            \caption{PEBBLE\_GRAPH($n$)}
            \label{alg:geometric_graph}
            \begin{algorithmic}[1]
                \STATE  $V\leftarrow$ PUMPED\_CONFIGURATION($n$)
                \STATE  $E\leftarrow \emptyset$
                \FORALL {$v,v'\in V,\; v\neq v'$}
                        \STATE $\pi_{v,v'}\leftarrow \text{EDGE\_PLANNER}(V,v,v')$
                        \IF {$\pi_{v,v'}\not = \bot$}
                            \STATE $E\leftarrow E\cup \{(v,v')\}$
                        \ENDIF
                \ENDFOR
                \RETURN $G=(V,E)$
            \end{algorithmic}
        \end{algorithm}

    \subsection{Properties of Pebble Graphs}
        We now discuss the various properties of this special graph. We first note that every configuration $C\subset V$ is also a pebble placement for some pebble problem that is defined on $G$. A less obvious property of the pebble graph $G$, which is described in the following proposition, allows us to transform pebble paths into robot paths.

        \begin{proposition}\label{thm:gg_path}
            Let $G=(V,E)$ be a pebble graph and let $C,C'\subset V$ be two configurations such that $C\equiv C'$. Then there exists a path $\pi_{\U'}$ for $\U'=(C,C')$.
        \end{proposition}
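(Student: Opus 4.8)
The plan is to reduce the continuous problem $\U'=(C,C')$ to the discrete pebble problem on $G$ and invoke Lemma~\ref{lem:pebble_property}. First I would observe that, since $C,C'\subset V$ and $V$ is a pumped configuration, both $C$ and $C'$ are pebble placements of the pebble problem $\P(G,C,C',m)$ with $|C|=|C'|=m$. The hypothesis $C\equiv C'$ is exactly the statement $\sig(G,C)=\sig(G,C')$, so by Lemma~\ref{lem:pebble_property} there exists a pebble path $\pi^*=P_1,\ldots,P_\ell$ with $P_1=C$ and $P_\ell$ a permutation of $C'$. By the \emph{separation rule} built into our pebble problem, each step $P_j\to P_{j+1}$ moves exactly one pebble along a single edge of $G$ while all other pebbles stay fixed.

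The heart of the argument is then to translate one such elementary pebble move into a legal (collision-free) motion of the corresponding robot, and to verify that concatenating these motions yields a valid unlabeled path. Suppose the step $P_j\to P_{j+1}$ moves the pebble on vertex $v$ to the neighboring vertex $v'$, with all other pebbles occupying the vertex set $U=P_j\setminus\{v\}=P_{j+1}\setminus\{v'\}\subseteq V$. Since $(v,v')\in E$, by construction of the pebble graph (the edge planner) there is a single-robot path $\pi_{v,v'}$ from $v$ to $v'$ in $\F$ such that a robot moving along $\pi_{v,v'}$ collides with a robot placed in no vertex $u\in V\setminus\{v,v'\}$; in particular it avoids every $u\in U$. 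Hence moving the active robot along $\pi_{v,v'}$ while the other $m-1$ robots remain stationary at $U$ keeps the whole team collision-free at every instant: at every $\theta$, the active robot's placement is disjoint from each stationary robot (edge-planner guarantee), and the stationary robots are pairwise disjoint because $U\subseteq V$ and $V$ is a pumped configuration (any $\leq m$-subset of $V$ is a configuration). So this elementary step is realized by a valid motion of all $m$ robots.

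I would then stitch these $\ell-1$ elementary motions together in order: the end configuration of step $j$ equals the start configuration of step $j+1$ (both equal $P_{j+1}$ as a set of single-robot placements), so the concatenation is continuous; reparametrize time to $[0,1]$. The resulting $\pi_{\U'}=\{\pi_1,\ldots,\pi_m\}$ starts at $C$ and ends at $P_\ell$, which is a permutation of $C'$; thus each robot $r_i$ travels from its start $c_i\in C$ to some $t\in C'$, and every target in $C'$ is covered. Throughout the motion the configuration is collision-free by the per-step argument above. This is precisely an unlabeled path for $\U'=(C,C')$.

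The main obstacle — really the only nontrivial point — is the collision-freeness of an elementary step: one must be careful that the edge-planner guarantee (no collision with any \emph{third} vertex of $V$) together with the pumped-configuration property (any $m$-subset of $V$ is collision-free) genuinely covers all pairwise interactions during the move, namely active-vs-stationary and stationary-vs-stationary. Note this is also exactly where the separation rule is needed: if two pebbles moved simultaneously we would need a guarantee about two robots traversing edges at once, which neither the edge planner nor the pumped-configuration property provides. Everything else is bookkeeping: concatenation of finitely many continuous paths and a time reparametrization.
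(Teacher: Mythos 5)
Your proposal is correct and follows essentially the same route as the paper's proof: invoke Lemma~\ref{lem:pebble_property} to obtain a pebble path, then transform each single-pebble move into a robot motion along the edge-planner path, using the edge-planner guarantee against stationary robots and the separation rule to exclude simultaneous motions. Your write-up merely makes explicit a few points the paper leaves implicit (the pumped-configuration property for stationary-vs-stationary pairs, the concatenation and endpoint bookkeeping), so there is no substantive difference.
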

        \begin{proof}
            By Lemma~\ref{lem:pebble_property} there is a pebble path $\pi^*$ for the pebble problem $\P(G,C,C',m)$. We transform the movements of the pebbles into $\pi^*$ to a valid motion of the robots in the following manner.
            A movement of the pebble $\tau_i$ on the edge $(v,v')\in E$ is transformed to the motion of the robot $r_i$ along the path $\pi_{v,v'}$. Notice that a collision between a robot and an obstacle cannot occur since the path was generated by the edge planner. Additionally, a moving robot cannot collide with another ``stationary'' robot that resides in some other vertex $u\in V$. Finally, a collision between two moving robots cannot occur since the pebble path $\pi^*$ must respect the separation rule (Section~\ref{secPebble}), which states that at most one pebble is allowed to move at a given time. \qed
        \end{proof}

\rsubbefore
    \subsection{Connecting Pebble Graphs}\label{subsec:connect}\rsubafter
        Proposition~\ref{thm:gg_path} implies that certain unlabeled problems can be solved using a single pebble graph. However, this statement does not hold for many other instances of the unlabeled problem. As an example, consider an unlabeled problem $\U=(S,T)$ in which there exists at least one pair $s\in S, t\in T, s\neq t$, such that a robot $r\in R$ placed in $s$ overlaps with another robot $r'\in R$ placed in $t$. Thus, $s,t$ cannot be in the same pumped configuration.

        Fortunately, we can combine several graphs in order to find paths for more general unlabeled problems. For instance, robots may move from a pebble graph $G_S=(V,E)$ where $S\subset V$, through several other pebble graphs until they will finally reach $G_T=(V',E')$ where $T\subset V'$.

        We first show that given two pebble graphs and an unlabeled path connecting two configurations, one from every graph, the robots can move from the first pebble graph to the second. This path serves as a ``bridge'' between the two graphs and connects not only the two configurations but many other configurations from the two graphs as well. Before describing a mechanism to generate such paths we provide a concrete description of the property discussed here in the form of the following lemma. We omit its proof, which is straightforward.\begin{figure*}[b]
    \rimgbefore
            \centering
            \setlength{\tabcolsep}{3pt}
            \begin{tabular}{ccc}
                \subfloat{\includegraphics[width=0.3\columnwidth]{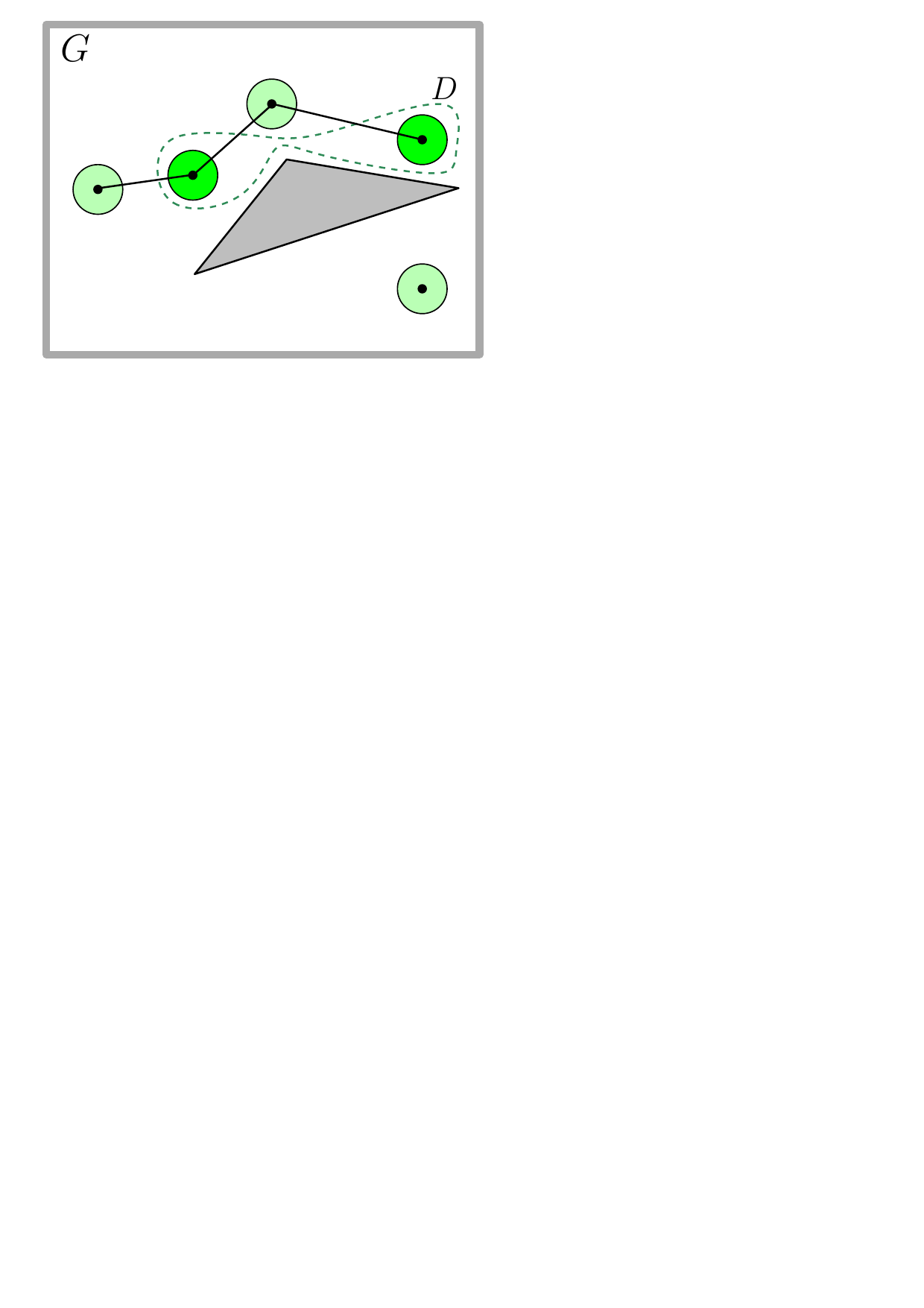}} &
                \subfloat{\includegraphics[width=0.3\columnwidth]{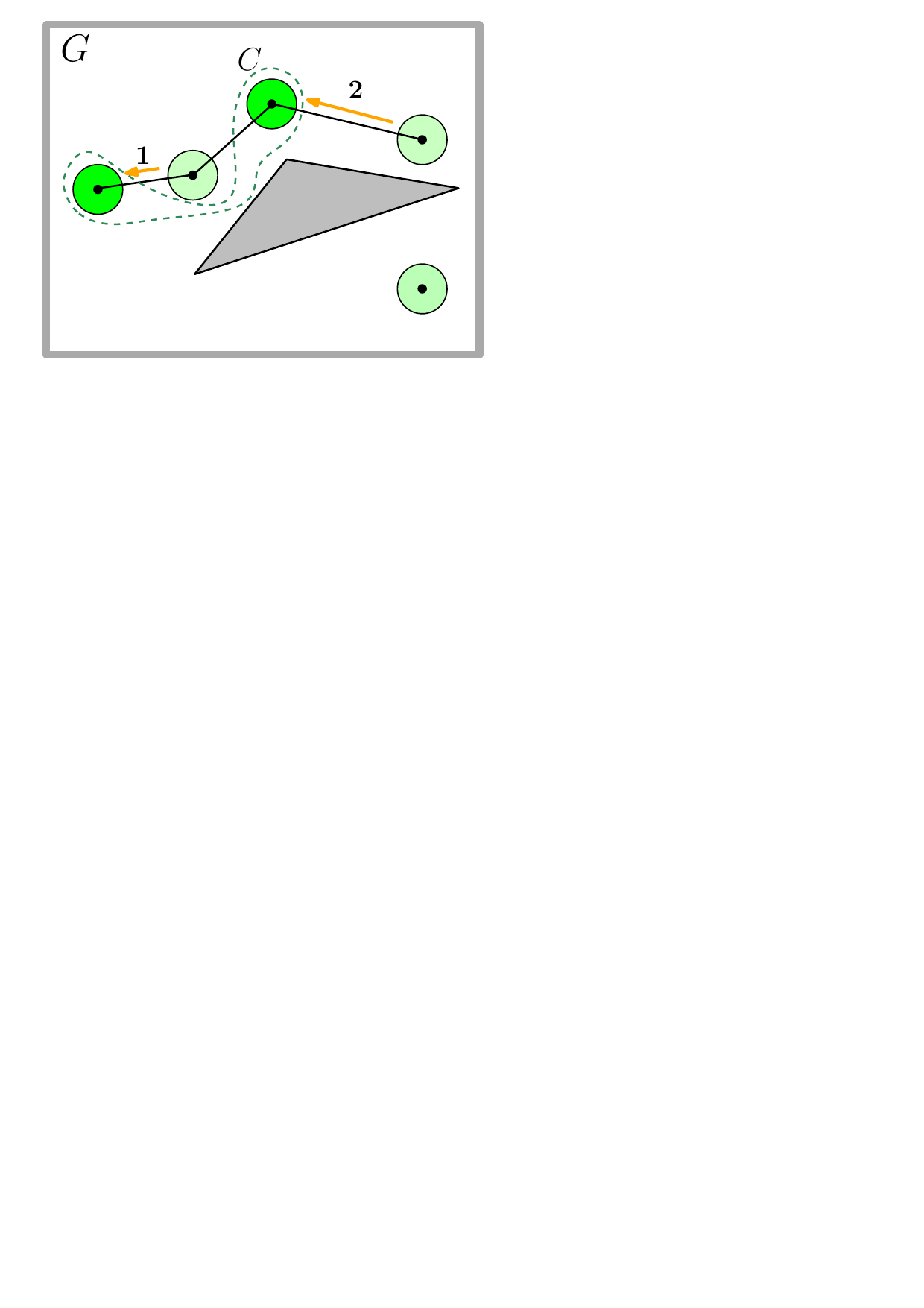}} &
                \subfloat{\includegraphics[width=0.3\columnwidth]{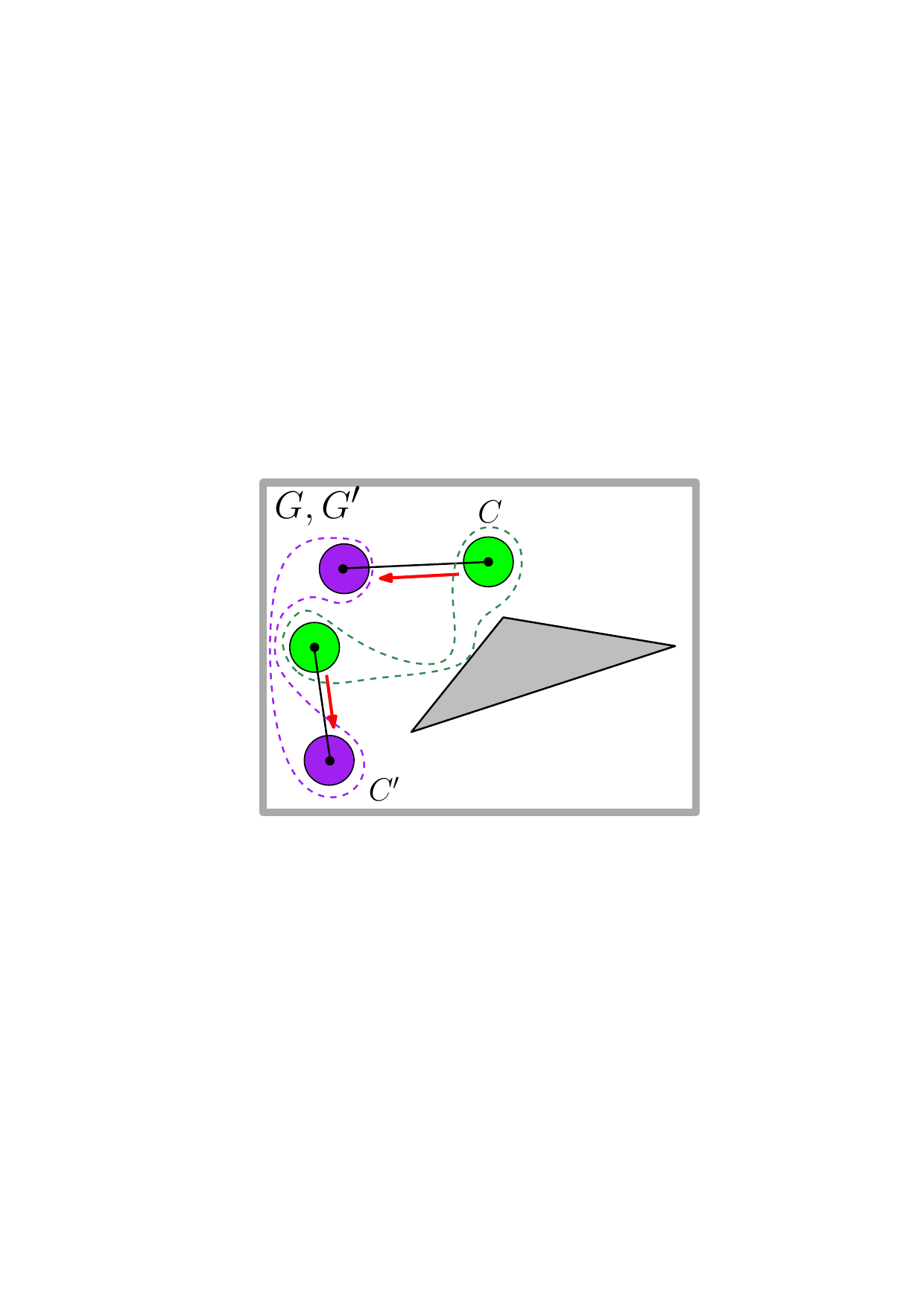}}\\
                (a) &
                (b) &
                (c)
            \end{tabular}
            \begin{tabular}{cc}
                \subfloat{\includegraphics[width=0.3\columnwidth]{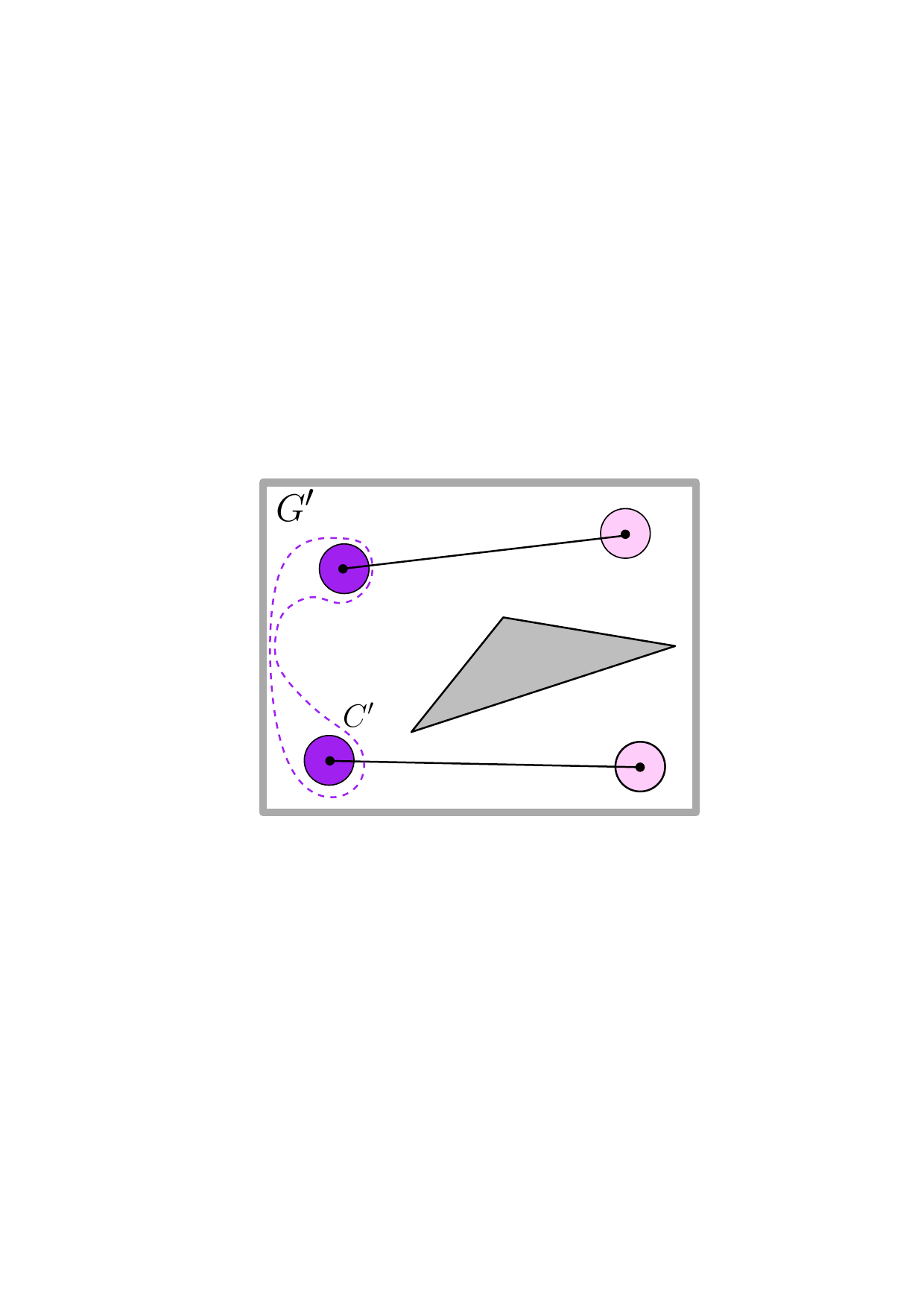}} &
                \subfloat{\includegraphics[width=0.3\columnwidth]{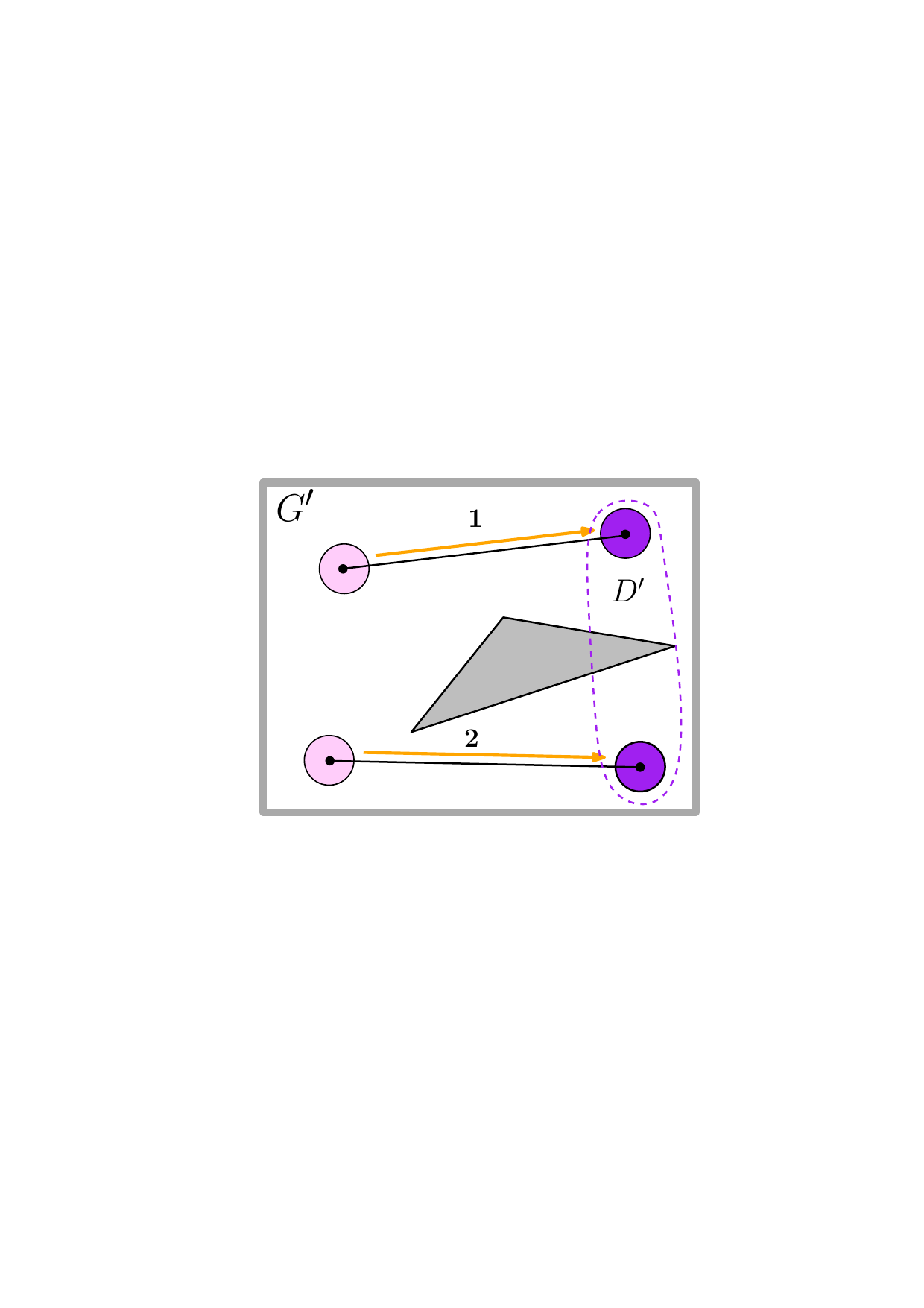}}\\
                (d) &
                (e)
            \end{tabular}
            \vspace{-5pt}
            \caption{An illustration of a path between two pebble graphs, as described in Lemma~\ref{lem:connecting_ggs}, from a configuration $D$ of the pebble graph $G$, to a configuration $D'$ of $G'$. $G$ consists of five vertices and two connected components, as illustrated in (a),(b), and $G'$ consists of four vertices with two connected components, as shown in (d),(e). In (a) the starting configuration $D$ of the two robots is shown. In (b) the robots move according to a pebble-induced path to the configuration $C$. In (c) the robots move from $G$ to $G'$ by a path that connects $C$ with $C'$. In (d) and (e) the two robots move from $C'$ to $D'$ according to a pebble induced path on $G'$. Notice that $C\equiv C', D\equiv D'$, as required.}
            \label{fig:graph_connection}
        \end{figure*}
\vspace{5pt}

        \begin{lemma}\rlembefore\label{lem:connecting_ggs}
            Let $C\subset V,C'\subset V'$ be two configurations of the pebble graphs $G=(V,E),G'=(V',E')$ and let $\pi_{C,C'}$ be a path for the unlabeled problem $\U'=(C,C')$. In addition, let $D,D'$ be two configurations such that $D\subset V, D'\subset V'$ and $D\equiv C, D'\equiv C'$. Then there exists a path $\pi_{\U''}$ for $\U''=(D,D')$.
        \end{lemma}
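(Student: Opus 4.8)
The plan is to produce $\pi_{\U''}$ by concatenating three legs: an ``in-graph'' motion from $D$ to $C$ inside $G$, the given bridge path $\pi_{C,C'}$ from $C$ to $C'$, and an ``in-graph'' motion from $C'$ to $D'$ inside $G'$. For the first and third legs I would invoke Proposition~\ref{thm:gg_path}. Since $D\equiv C$ and both are configurations contained in the pumped configuration underlying $G$, the proposition supplies a collision-free path $\pi^{(1)}$ for the unlabeled problem $(D,C)$; by its construction every robot travels only on edges of $G$, and at the end the robots occupy the set $C$ (in some permutation). Symmetrically, from $C'\equiv D'$ in $G'$ (the relation is symmetric) the proposition yields a collision-free path $\pi^{(3)}$ for $(C',D')$ whose robots move only on edges of $G'$ and which begins with the robots occupying $C'$. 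The middle leg is $\pi^{(2)}:=\pi_{C,C'}$, given by hypothesis and known to be a collision-free unlabeled path from $C$ to $C'$.

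What remains is bookkeeping. To concatenate I would reparametrize, e.g.\ setting $\pi_{\U''}(\theta)$ to $\pi^{(1)}(3\theta)$ on $[0,\tfrac13]$, to $\pi^{(2)}(3\theta-1)$ on $[\tfrac13,\tfrac23]$, and to $\pi^{(3)}(3\theta-2)$ on $[\tfrac23,1]$. The one point of care is that, because the robots are interchangeable, the endpoint of $\pi^{(1)}$ equals $C$ only up to a permutation of the robot indices, which need not match the labeling at the start of $\pi^{(2)}$; the same issue arises at the $\pi^{(2)}$--$\pi^{(3)}$ seam. Since all robots in $R$ are geometrically identical, I would compose each leg with the appropriate permutation of robot indices---handing robot $r_i$ whatever trajectory the next leg prescribes for the robot currently sitting at $r_i$'s \sr configuration. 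This relabeling is geometrically inert, so every instant $\pi_{\U''}(\theta)$ is still a configuration, and continuity is preserved because the permutations are applied at the seams where the robot positions already coincide. Hence $\pi_{\U''}$ is collision-free throughout, starts at $D$, and ends at a permutation of $D'$---exactly the requirements of an unlabeled path for $\U''=(D,D')$.

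I do not anticipate any genuine obstacle here, which is why the paper can omit the proof: the only subtlety is the permutation matching at the two junctions, and it is harmless precisely because within a color class the robots are interchangeable, while Proposition~\ref{thm:gg_path} already takes care of collision-free motion (including robot--robot and robot--obstacle avoidance) on each of the two in-graph legs.
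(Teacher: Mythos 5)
Your proposal is correct and follows essentially the same route as the paper's (omitted) argument: apply Proposition~\ref{thm:gg_path} to obtain in-graph paths for $(D,C)$ and $(C',D')$ and concatenate them with $\pi_{C,C'}$. The extra care you take with relabeling robots at the seams is a harmless elaboration of what the paper treats as immediate, justified exactly as you say by the robots being geometrically identical and the unlabeled path only requiring targets to be covered up to permutation.
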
\rlemafter

       An example of a path, as described in Lemma~\ref{lem:connecting_ggs}, is given in Figure~\ref{fig:graph_connection}. Paths similar to $\pi_{C,C'}$ described above are generated using the following component which generalizes the component \emph{local planner} used in standard \sbs algorithms. We postpone a detailed description of this component to Section~\ref{sec:con}.

        Given two pumped configurations $V,V'$ the \emph{connection generator} returns $q$ several paths such that every returned path $\pi_{C,C'}$ is a solution for some unlabeled problem $\U'=(C,C')$ where $C,C'$ are configurations such that $C\subset V, C'\subset V'$.

        By Lemma~\ref{lem:connecting_ggs}, a single connection implicitly connects a collection of configurations with a specific signature from the first graph with a similar collection in the second graph. We require from the connection generator to create several such connections in order to connect a variety of signatures between the two graphs.
\rsubbefore
    \subsection{Description of \Balgup}\rsubafter
        Next, we extend Lemma~\ref{lem:connecting_ggs} to describe still more complex paths. The \algup algorithm has a preprocessing phase and a query phase. In the first phase it samples a collection of pebble graphs and connects them using the \congen. Those connections represent edges in a roadmap $\H$ whose vertices are configurations from the different pebble graphs. Additional edges, that represent paths between configurations within the same pebble graph, are added to $\H$ afterwards. In the query phase, given start and target configurations $S,T$, \algup generates two pebble graphs that contain them. These two graphs are connected to other previously sampled pebble graphs. We give a more formal description below, along with the description of the parameters used by \algup.\vspace{5pt}

        \noindent\textbf{Parameters.} $g$ is the number of sampled pebble graphs; $n$ represents the size of a sampled pumped configuration; $q$ is the maximal number of connections between two pebble graphs.\vspace{5pt}

        \noindent\textbf{Preprocessing (Algorithm~\ref{alg:preprocess}).} \algup samples a collection of $g$ pebble graphs $\G$ (line 3). For every pair of sampled pebble graphs $G=(V,E),G'=(V',E')$ we apply the CONNECT procedure (line 6)  that generates several connections between the two pebble graphs, and updates the roadmap accordingly. Then, we add edges to $\H$ that represent connections that follow from Proposition~\ref{thm:gg_path} (line 8). We remind the reader that two configurations are equivalent only if they were taken from the same pebble graph, and their signatures are identical. We draw an edge between them but do not generate the respective paths at this point, as only some of them will eventually participate in a path returned in the query phase (an economical ``lazy'' approach) \textbf{This concept is further discussed in the paragraph describing the path retrieval stage below}.\vspace{5pt}

        \begin{algorithm}[bh!]
            \caption{PREPROCESS($g,q,n$)}
            \label{alg:preprocess}
            \begin{algorithmic}[1]
                \STATE  $\V\leftarrow \emptyset;\; \E\leftarrow \emptyset;\;\H=(\V,\E)$
                \STATE  $\G\leftarrow \emptyset$
                \FOR{$i = 1 \to g$}
                    \STATE  $G\leftarrow$ PEBBLE\_GRAPH($n$)
                    \STATE  $\G\leftarrow \G\cup \{G\}$
                \ENDFOR
                \FORALL {$G,G'\in \G$}
                    \STATE CONNECT($G,G',\H,q$)
                \ENDFOR
                \FORALL {$C,C'\in \V$ where $C\equiv C'$}
                    \STATE $\E\leftarrow \E\cup\{(C,C')\}$
                \ENDFOR
            \end{algorithmic}
        \end{algorithm}

        \noindent\textbf{Connect (Algorithm~\ref{alg:connect}).}
        This is an auxiliary method that uses the \congen component to connect two given pebble graphs (line 1). For every path $\pi_{C,C'}$ returned by the  \congen, where $C,C'$ are configurations of $G,G'$, respectively, $C$ and $C'$ are added as vertices to the roadmap $\H$ together with an edge between them. As this edge may be used later on as a part of a path for the robots, we attach to it the actual path $\pi_{C,C'}$, which was generated by the \congen.\vspace{5pt}

        \begin{algorithm}[bh!]
            \caption{CONNECT($G=(V,E),G'=(V',E'),\H=(\V,\E),q$)}
            \label{alg:connect}
            \begin{algorithmic}[1]
                \STATE $\{(C_1,C'_1),\ldots, (C_q,C'_q)\} \leftarrow \text{CONGEN}(V,V',q)$
                \FOR {$i=1\to q$}
                    \STATE $\V\leftarrow \V\cup\{C_i,C'_i\}$
                    \STATE $\E\leftarrow \E\cup\{(C_i,C'_i)\}$
                \ENDFOR
            \end{algorithmic}
        \end{algorithm}

        \noindent\textbf{Query (Algorithm~\ref{alg:query}).} In this phase, \algup is given the start and target configurations. As $S,T$ can be considered as pumped configurations (containing $m$ \sr configurations) we generate the respective pebble graphs $G_S,G_T$ (line 1). We then connect $G_S,G_T$ to previously sampled pebble graphs using the \congen and add relevant vertices and edges to $\H$ (CONNECT procedure described in Algorithm~\ref{alg:connect}). Finally, if $S,T$ are connected in $\H$ a path retrieval is carried out. \vspace{5pt}

         \begin{algorithm}[bh!]
            \caption{QUERY($S,T,q$)}
            \label{alg:query}
            \begin{algorithmic}[1]
                \STATE $G_S=(S,\emptyset)$; $G_T=(T,\emptyset)$
                \FORALL {$G\in \G$}
                    \STATE CONNECT($G,G_S,\H,q$)
                    \STATE CONNECT($G,G_T,\H,q$)
                \ENDFOR
                \IF {$S,T$ not connected in $\H$}
                    \RETURN FAILURE
                \ENDIF
                \RETURN RETRIEVE\_PATH($\H,S,T$)
            \end{algorithmic}
        \end{algorithm}

        \noindent\textbf{Path Retrieval (Algorithm~\ref{alg:retrieve}).}
        Using a graph search algorithm, a path is found between $S$ and $T$ in $\H$ (line 2). Then, it is transformed into a solution to the unlabeled problem $\U=(R,S,T)$. If two consecutive configurations $C_{i-1},C_{i}$ on the path are equivalent, then the respective pebble path is produced (line 6) and converted to a path for the unlabeled problem $\U=(R,C_{i-1},C_i)$, following the process described in Proposition~\ref{thm:gg_path}. If on the other hand $C_{i-1}\not\equiv C_{i}$, then the path $\pi_{C_{i-1},C_i}$, that was generated by the \congen, is used.

        Notice that while $\H$ may contain many equivalent configurations that share an edge in $\H$, the actual paths between such configurations (that are induced by pebble problems) are only generated if these configurations appear on a path that is found in the retrieval stage.

        \begin{algorithm}[bt!]
            \caption{RETRIEVE\_PATH($\H,S,T$)}
            \label{alg:retrieve}
            \begin{algorithmic}[1]
                \STATE $\Pi\leftarrow \emptyset$
                \STATE $\{C_0,\ldots,C_\ell\}\leftarrow \text{GRAPH\_PATH}(\H,S,T)$
                \FOR {$i=1\to \ell$}
                    \IF {$C_{i-1}\equiv C_i$}
                        \STATE $G\leftarrow$ pebble graph of $C_i$
                        \STATE $\pi^*\leftarrow \text{PEBBLE\_SOLVER}(G,C_{i-1},C_i)$
                        \STATE $\pi\leftarrow \text{TRANSFORM\_PATH}(\pi^*)$
                        \STATE $\Pi$.append($\pi$)
                    \ELSE
                        \STATE $\Pi$.append($\pi_{C_{i-1},C_{i}}$)
                    \ENDIF
                \ENDFOR
                \RETURN $\Pi$
            \end{algorithmic}
        \end{algorithm}

        \vspace{-10pt} 
\rsecbefore
\vspace{-10pt}
\section{The Connection Generator}\label{sec:con}\rsecafter
    We describe an algorithm for the connection generator component (CONGEN), used by \algup. Recall that the \congen is given two pumped configurations $V,V'$ and an integer $q$ that represents the number of desired connections. Throughout this section we will use the \emph{local planner} mechanism, that was used in the implementation of the \emph{edge planner} (Section~\ref{subsec:constrion_pebble}). Recall that given two single-robot configurations $v,v'\in \F$ the local planner attempts to construct a path $\pi_{v,v'}$ for a robot $r\in R$ from $v$ to~$v'$.

    The algorithm transforms the problem of finding paths between pumped configurations into the problem of finding an \emph{independent set} in an undirected graph. We generate the set of pairs $\R=\{(v,v')|v\in V,v'\in V',\;\pi_{v,v'}\neq\bot\}$. Namely, these are pairs of elements from $V,V'$ for which the local planner successfully generated a path. We say that two pairs $(v,v'),(u,u')\in\R$ \emph{interfere} if there exists $\theta\in[0,1]$ such that a robot $r\in R$ placed in $\pi_{v,v'}(\theta)$ collides with another robot $r'\in R$ placed in $\pi_{u,u'}(\theta)$. Notice that every two pairs $(v,v'),(u,u')\in\R$, such that $v=u$ or $v'=u'$, interfere by definition. We construct the \emph{interference graph} $\I$ whose vertices are the elements of $\R$, i.e., every vertex of $\I$ represents a path. We connect a pair of vertices of $\I$ by an edge if they interfere.

    Notice that by definition, every independent set of size $m$ of the vertices of $\I$ represents a collection of $m$ non-colliding single-robot paths. Note that the problem of finding an independent set is known to be NP-Hard~\cite{p-cc94}.

    We devised the following heuristic to find several independent sets: vertices of the graph are examined one by one, when the order is determined by a random permutation of the vertices. A new vertex is added to the set only if it is not connected to other vertices that are already in the set.\vspace{5pt}

    \noindent\textbf{Remark.} We concede that our approach to finding an independent set is not guaranteed to find a solution. This may impede attempts to prove the completeness of the \algup algorithm. We address this issue in Section~\ref{sec:complete} and state the modification that could lead to a probabilistic completeness proof of \algup.

\rsecbefore
\vspace{-10pt}
\section{The $k$-Color Case}\label{sec:kpump}
    We describe the changes required to transform \algup into \algkp---an algorithm for the $k$-color problem. We stress that the extension to the $k$-color case is straightforward and we provide it here only for the completeness of presentation. \algkp simultaneously samples several pumped configurations---each corresponds to a different color and hence to a different unlabeled problem. The resulting pebble graphs are constructed in a manner that prevents collision between robots of different colors. This calls for the redefinition of the edge planner mechanism (Section 4) as well as other components.

\vspace{-10pt}
    \subsection{Composite Pebble Graphs}
    \vspace{-10pt}
        We begin with several definitions that extend the primitives presented in the description of \algup. Recall that the $k$-color problem $\L$ is defined by $\U_1,\ldots,\U_k$ where each $\U_i=(R_i,S_i,T_i)$ is an unlabeled problem and $|R_i|=m_i$.

        \begin{definition}
            Let $\dC=\{C_1,\ldots,C_k\}$ be a collection of $k$ configurations, where $C_i$ is a configuration of $\U_i$. $\dC$ is a \emph{composite configuration} if for every $c\in C_i,c'\in C_j$, where $i\neq j$, it holds that $R_i(c)\cap R_j(c')=\emptyset$.
        \end{definition}

        \begin{definition}
            Let $\dV=\{V_1,\ldots,V_k\}$ be a collection of pumped configurations, where $V_i$ is a pumped configuration for $\U_i$. $\dV$ is a \emph{composite pumped configuration} if every $\dC=\{C_1,\ldots,C_k\}$, such that $|C_i|=m_i$ and $C_i\subset V_i$, is a composite configuration.
        \end{definition}

        Let $\dV$ be a composite pumped configuration, as defined above. We construct a pebble graph for every pumped configuration $V_i$ of $\dV$. The edges of every graph are generated in a similar manner to the unlabeled case, although here we impose more restrictions to avoid the collision between robots of different colors.

        Next, we generate the \emph{composite pebble graph} $\dG=\{G_1,\ldots,G_k\}$, where $G_i$ is the pebble graph that resulted from the pumped configuration $V_i$. See an illustration in Figure~\ref{fig:composite_graph}. We now define an equivalence relation between composite configurations of the same composite pebble graph. Recall that two configurations are equivalent, if their signatures are identical (Definition~\ref{def:equiv}). We generalize this notion for the case of composite configurations.
        \begin{figure*}[b]
    \rimgbefore
            \centering
            \setlength{\tabcolsep}{3pt}
            \includegraphics[width=0.5\columnwidth]{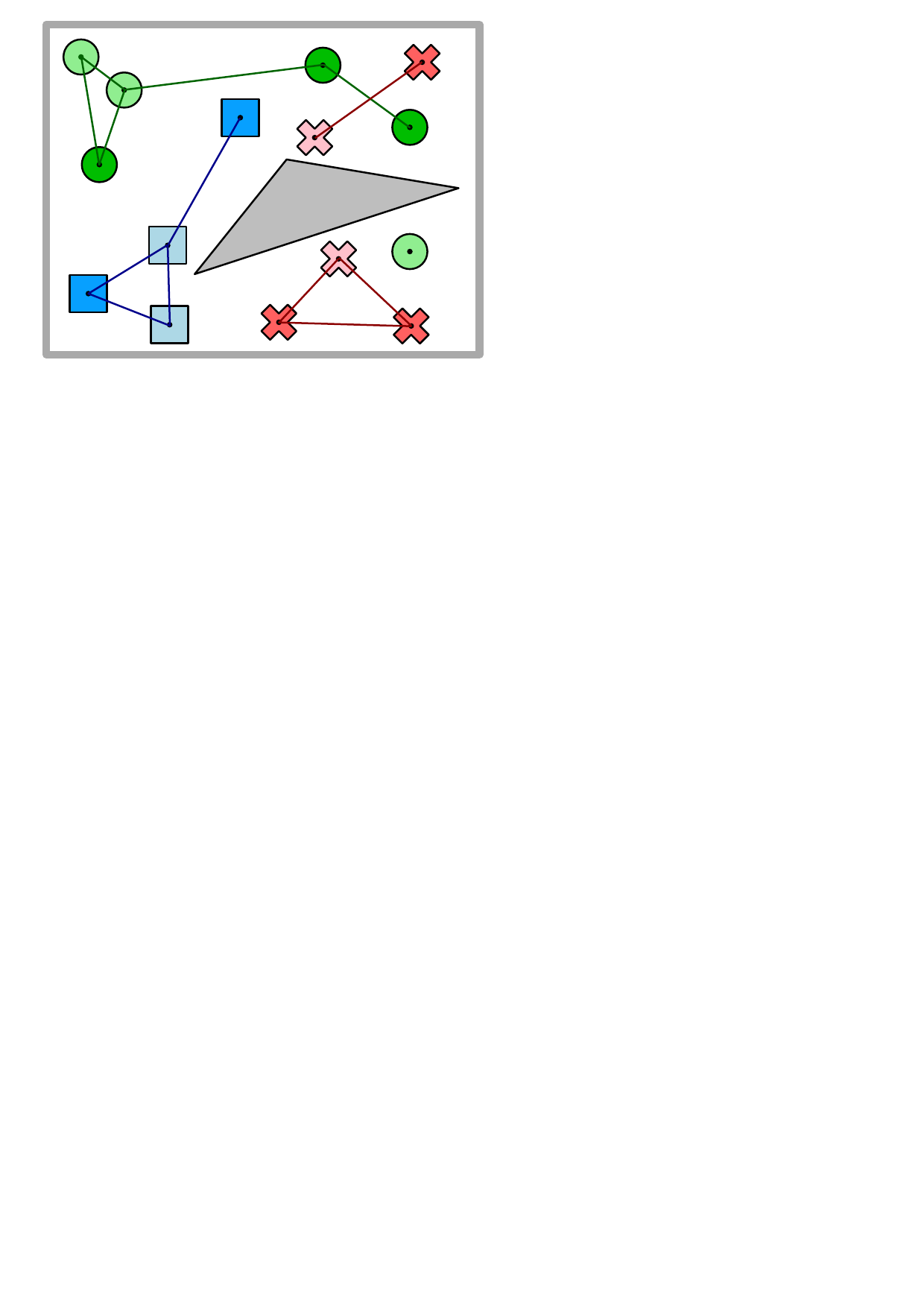}
            \vspace{-5pt}
            \caption{An illustration of a composite pebble graph for a 3-color problem, where each group consists of robots of a different shape (square, disc and cross). Note that the positions of the robots are non-overlapping and that robots moving along edges do not collide with stationary robots from the same group or from a different color. }
            \label{fig:composite_graph}
        \end{figure*}
\vspace{5pt}

        \begin{definition}
            Let $\dG=\{G_1,\ldots,G_k\}$ be a composite pebble graph, where $G_i=(V_i,E_i)$. Let $\dC=\{C_1,\ldots,C_k\}, \dC'=\{C'_1,\ldots,C'_k\}$ be two composite configurations, where $C_i,C'_i\subset V_i$. We say that $\dC$ and $\dC'$ are \emph{equivalent}, and denote this relation by $\dC\equiv \dC'$, if for every $1\leq i\leq k$ it holds that $C_i\equiv C'_i$, where the latter ``$\equiv$'' symbol represents the equivalence relation between configurations.
        \end{definition}

        The following proposition is a generalization of Proposition~\ref{thm:gg_path} and its proof is omitted as it is similar to the proof for the unlabeled case.

        \begin{proposition}\label{thm:composite}
            Let $\dC=\{C_1,\ldots,C_k\}, \dC'=\{C'_1,\ldots,C'_k\}$ be two composite configurations of the same composite pebble graph. If $\dC\equiv \dC'$ then there exists a solution to the $k$-color problem $\{\U'_1,\ldots,\U'_k\}$, where $\U'_i=(R_i,C_i,C'_i)$.
        \end{proposition}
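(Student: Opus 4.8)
The plan is to follow the proof of Proposition~\ref{thm:gg_path} almost verbatim, applying Lemma~\ref{lem:pebble_property} once per color and then splicing the resulting pebble paths into a single $k$-color path. First I would fix, for each $1\le i\le k$, the pebble problem $\P(G_i,C_i,C'_i,m_i)$. Since $\dC\equiv\dC'$ we have $C_i\equiv C'_i$ as placements of $G_i$, so Lemma~\ref{lem:pebble_property} supplies a pebble path $\pi^*_i=P^i_1,\ldots,P^i_{\ell_i}$ from $C_i$ to $C'_i$ obeying the separation rule. Each intermediate placement $P^i_t$ is a set of $m_i$ distinct vertices of the pumped configuration $V_i$, hence a configuration of $\U_i$.

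Next I would assemble the $k$-color path by executing the colors one after another: run $\pi^*_1$ to completion while the robots of colors $2,\ldots,k$ rest at $C_2,\ldots,C_k$; then run $\pi^*_2$ while color $1$ rests at $C'_1$ and colors $3,\ldots,k$ rest at $C_3,\ldots,C_k$; and so on, finishing with $\pi^*_k$. Equivalently, concatenate the $k$ pebble paths, so that at every time step exactly one robot of exactly one color moves along an edge of its own pebble graph while every other robot is stationary at a vertex of its respective pumped configuration.

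Then I would check that this is a legal $k$-color path. A move of a pebble $\tau$ along an edge $(v,v')\in E_i$ is translated, exactly as in Proposition~\ref{thm:gg_path}, into the motion of the corresponding robot $r\in R_i$ along the path $\pi_{v,v'}$ produced by the (composite) edge planner. Collisions of this robot with obstacles and with stationary same-color robots are excluded precisely as in the unlabeled case. A collision with a stationary robot of a different color $j$ is excluded because the composite edge planner accepts $\pi_{v,v'}$ only if a color-$i$ robot traversing it never meets a color-$j$ robot placed at any vertex of $V_j$ — and throughout the motion every stationary other-color robot sits at a vertex of $V_j$ (it rests at $C_j$ or at $C'_j$, both subsets of $V_j$). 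Two moving robots never collide because, by the concatenation together with the per-color separation rule, at most one robot is ever in motion. Finally, once $\pi^*_i$ terminates, color $i$ occupies a permutation of $C'_i$, so the terminal composite placement is a permutation of $\dC'$ — which is a composite configuration since $\dV$ is a composite pumped configuration — and this is exactly what a solution to $\{\U'_1,\ldots,\U'_k\}$, with $\U'_i=(R_i,C_i,C'_i)$, demands.

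I expect no genuine analytical obstacle here; the proposition is essentially bookkeeping on top of Lemma~\ref{lem:pebble_property} and Proposition~\ref{thm:gg_path}. The only points that need care are (i) verifying that at every instant each stationary robot of a color $j\neq i$ is located at a vertex of $V_j$, so that the inter-color test built into the composite edge planner indeed applies, and (ii) verifying that each intermediate pebble placement $P^i_t$ is a bona fide configuration of $\U_i$; both follow directly from the definition of a composite pumped configuration, which is why the full write-up can reasonably be compressed or omitted.
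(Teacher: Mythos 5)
Your proof is correct and follows exactly the route the paper intends: the paper omits the proof of Proposition~\ref{thm:composite}, stating only that it parallels Proposition~\ref{thm:gg_path}, and your argument is precisely that generalization (per-color application of Lemma~\ref{lem:pebble_property}, translation of single-pebble moves via the composite edge planner, with inter-color safety coming from the composite pumped configuration and the edge planner's added restrictions). Nothing is missing.
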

\vspace{-10pt}
    \subsection{Description of \Balgkp}
        \vspace{-10pt}
        We describe the sampling-based algorithm for the $k$-color case. \algkp constructs a roadmap $\H$ whose vertices are composite configurations (recall that in \algup, the vertices of this roadmap were configurations). The edges of $\H$ represent valid paths between composite configurations. These paths either connect equivalent composite configurations, as described in Proposition~\ref{thm:composite}, or composite configurations from different composite graphs, where the latter paths are generated using the following mechanism, which is a generalization of the connection generator (Section~\ref{subsec:connect}). Given two composite pumped configurations, and an integer $q$, the  \emph{composite connection generator} returns a collection of $q$ paths, for the $k$-color problem, between the two composite pumped configurations.

        We now return to the description of \algkp. \algkp samples composite pumped configurations $\dV_1,\dV_2,\ldots,\dV_g$ and generates the respective composite pebble graphs $\dG_1,\ldots,\dG_g$. Given a path between two composite configurations $\dC,\dC'$ returned by the composite connection generator we add the vertices $\dC,\dC'$ to $\mathcal{H}$ and the respective edge. Finally we connect the start and target composite configurations $\dS,\dT$, respectively, to previously sampled composite pebble graphs.

        We note that for the fully-colored case these structures remain the same, although the planning on the pebble graphs is simplified a bit since every pebble graph accommodates in this case only a single pebble.

\rsecbefore
\section{Toward Probabilistic Completeness of UPUMP}\label{sec:complete}\rsecafter
    We show that by making a simple assumption on the work of the connection generator, it can be proved that \algup is probabilistically complete. In order to do so we show that a simplified version of the \algup algorithm, called \algub, is probabilistically complete.
    Its  samples consist of pumped configurations of size $m$ (as opposed to size $n>m$ in Section~\ref{secUnlabeled}) which result in \emph{degenerate} pebble graphs where the number of vertices is equal to the number of pebbles. We prove the completeness of \algub by showing a reduction from the PRM algorithm for the fully-colored multi-robot motion planning problem. The completeness of \algup follows as a rather straightforward corollary.

    We stress that it still might be possible that \algup (and \algkp), in its original formulation, is probabilistically complete, and we hope that the efforts made in this section will ultimately assist in proving this.

    \subsection{The PRM Algorithm for the Fully-Colored Case}
        The PRM algorithm was initially designed to solve \sr motion planning problems. However, it can be used for solving the fully-colored multi-robot motion planning problem by considering the fleet of robots as one composite robot. We briefly describe the PRM algorithm for the fully-colored case. Recall that in the fully-colored problem, every robot $r_i$ is assigned with specific start and target positions $s_i,t_i$. For the purpose of the probabilistic completeness proof of \algup, we may assume that the robots are geometrically identical.

        Recall that the PRM algorithm for the \sr case consists of two main phases. In the preprocessing phase the algorithm samples a collection of valid \sr configurations. Then, for every sampled \sr configuration it finds its nearest neighbors and tries to connect it to the neighbors using the local planner. In the query phase, the start and target \sr configurations are connected to the constructed roadmap by considering connections to the nearest neighbors of the start and target, respectively.
        We intentionally avoid from referring to a specific neighbor finding technique since there are several methods that are suitable for this task. Our only requirement from the neighbor-finding technique is that it will lead to a probabilistically complete PRM algorithm.
        Such methods are described in the work of Karaman and Frazolli~\cite{kf-sbaomp11}, where the issue of completeness is discussed as well.

        In the case of the fully-colored multi-robot motion planning problem, every sample of the PRM consists of $m$ \sr configurations, one \sr configuration for every robot. In contrast with the unlabeled case, where every robot can be assigned with any \sr configuration, here every \sr configuration is associated with a specific robot. This is formalized in the following definition.

        \begin{definition}
            Let $C=\{c_1,\ldots,c_m\}$ be a configuration and let $\sigma$ be some permutation of $\{1,\ldots,m\}$. The \emph{ordered configuration} of $C$ for the permutation $\sigma$ is defined to be $\sigma(C)=(c_{\sigma(1)}, \ldots, c_{\sigma(m)})$.
        \end{definition}

        We denote by $\sigma_I$ the \emph{identity permutation}. In an ordered configuration $\sigma(C)$ a position of the robot $r_i$ is represented by $c_{\sigma(i)}$. Hence, the samples of the PRM algorithm for the fully-colored case are ordered configurations. For simplicity, we may assume that a configuration is sampled and a specific permutation $\sigma_{\text{PRM}}$ is assigned to it.
        A connection between two ordered configuration is achieved by applying the \emph{multi-robot local planner}. This component returns a set of $m$ paths between two ordered configuration (if they exist), one for each robot, such that they are all collision free, both with respect to the obstacles and with respect to the other robots.

        \begin{definition}
            Let $\sigma(C)=(c_1,\ldots,c_m), \sigma'(C')=(c'_1,\ldots,c'_m)$ be two ordered configurations.
            Denote by $\pi_i$ the path returned by the local planner (Section~\ref{subsec:constrion_pebble}) on the  input $c_i,c'_i$. Suppose that for every $i$, it holds that $\pi_i\neq \bot$, namely, the local planner successfully generated a path for the input $c_i,c'_i$. In addition, suppose that every pair of paths $\pi_i,\pi_j$ is collision free, i.e., for every $\theta\in [0,1]$, $r(\pi_i(\theta))\cap r(\pi_j(\theta))=\emptyset$ ,where $r(c)$, for $c\in \C$, represents the portion of the configuration space that is covered by the robot that is placed in $c$ (Section~\ref{secPreliminaries}). Then the \emph{multi-robot local planner} returns the set of paths $\{\pi_1,\ldots,\pi_m\}$. Otherwise, it returns $\bot$.
        \end{definition}

        If the multi-robot local planner successfully connects two ordered configurations, then an edge between them is added to the PRM roadmap. We summarize the steps of the PRM algorithm for the fully-colored problem. In the \emph{preprocessing} phase, PRM samples a collection of ordered configurations $\{\sigma_{\text{PRM}}(C_1),\ldots,\sigma_{\text{PRM}}(C_g)\}$.
        Then, for every sampled ordered configuration it finds a set of neighbors and attempts to connect them with the current sample. We refer to the roadmap that results from this process as the \emph{induced roadmap} of the samples $\sigma_{\text{PRM}}(C_1),\ldots,\sigma_{\text{PRM}}(C_g)$. In the \emph{query phase}, the PRM algorithm is given two ordered configurations $(s_1,\ldots,s_m),(t_1,\ldots,t_m)$, and attempts to connect them to the roadmap. The following theorem is a generalization of the completeness theorem for the \sr case~\cite{kf-sbaomp11}.

        \begin{theorem}\label{thm:prm_complete}
            Let $\{\U_1,\ldots,\U_m\}$ be a fully-colored problem where $\U_i=(r_i,s_i,t_i)$ for which there is a solution. Then there exist constants $a>0,g_0\in\dN$, such that a PRM algorithm with $g>g_0$ samples will find a solution with probability at least $1-e^{-ag}$.
        \end{theorem}

    \subsection{The UBASIC Algorithm}
        We present the \algub algorithm, which is a simplified version of the \algup algorithm.  The pseudo-code of \algub is identical to the one described for \algup in Section~\ref{secUnlabeled}. However, we set the number of vertices of the sampled pebble graphs to be $m$, i.e., we assign $n:=m$. This forces Algorithm~\ref{alg:geometric_graph} to generate configurations, instead of pumped configurations. In order to show that \algub is complete, we enforce an additional constraint on the connection generator. It is described next.

        Recall that the connection generator transforms the task of pathfinding between two pumped configurations to the problem of finding an independent set of size $m$, that represents a set of non-colliding paths, in the interference graph $\I$ (Section~\ref{sec:con}).  Currently, the independent sets in $\I$ are found using a greedy technique, which is not guaranteed to find a solution, even if one exists. We will introduce below an additional step to the connection generator that is guaranteed to find at least one solution, if exists. For now, we assume that the following assumption holds. We will discuss its impact on \algup later on.
        \begin{assumption}\label{assume:basic}
            Let $C,C'$ be two configurations. Suppose that there exist two permutations, $\sigma,\sigma'$, for which the multi-robot local planner finds a path for the input $\sigma(C),\sigma'(C')$. Then, upon the application of the connection generator on the input $C,C'$, it returns at least one path $\pi_{C,C'}$, that is a solution to the unlabeled problem $\U'=(C,C')$.
        \end{assumption}

        Notice that we do not insist that the connection generator will return exactly the same path that was generated by the multi-robot local planner. The following observation shows that an unlabeled problem has a solution if and only if there exists a solution to some fully-colored problem from a family of problems. It is a crucial component in the probabilistic completeness proof of the \algub algorithm.

        \begin{observation}\label{obs}
            Let $\U=(R,S,T)$ be an unlabeled problem, where $R=\{r_1,\ldots,r_m\}, S=\{s_1,\ldots,s_m\}, T=\{t_1,\ldots,t_m\}$. There is a solution to $\U$ if and only if there exists a permutation $\sigma_T$, such that there is a solution to the fully-colored problem $\L=\{\U_1,\ldots, \U_m\}$, where $\U_i=(r_i,s_i,t'_i)$ and $\sigma_T(T)=(t'_1,\ldots,t'_m)$.
        \end{observation}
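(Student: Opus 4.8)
The plan is to prove both implications directly from the definitions in Section~\ref{secPreliminaries}, using the single observation that in both the unlabeled and the fully-colored setting a solution \emph{is} the same object --- a collection of $m$ single-robot paths $\{\pi_1,\ldots,\pi_m\}$ whose simultaneous placements $\{\pi_1(\theta),\ldots,\pi_m(\theta)\}$ form a configuration for every $\theta\in[0,1]$ --- and that the two notions differ only in the accepting condition imposed on the endpoints $\pi_i(1)$.

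For the ``only if'' direction I would start from an unlabeled path $\pi_{\U}=\{\pi_1,\ldots,\pi_m\}$ solving $\U$. By definition each $\pi_i$ steers $r_i$ from $s_i$ to some element of $T$, the motion is collision-free throughout, and $\pi_{\U}(1)$ is a permutation of $T$; in particular the endpoints $\pi_1(1),\ldots,\pi_m(1)$ are pairwise distinct and exhaust $T$. I would then \emph{define} $\sigma_T$ by $\sigma_T(T)=(\pi_1(1),\ldots,\pi_m(1))$ and set $t'_i=\pi_i(1)$, $\U_i=(r_i,s_i,t'_i)$. The very same paths now witness a solution to the fully-colored problem $\L=\{\U_1,\ldots,\U_m\}$: robot $r_i$ reaches its prescribed target $t'_i$, and mutual collision-freeness (including between distinct colors, which here are singletons) is inherited verbatim from $\pi_{\U}$.

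For the ``if'' direction I would take the permutation $\sigma_T$ together with a fully-colored solution $\{\pi_1,\ldots,\pi_m\}$ for the associated $\L$. Each $\pi_i$ moves $r_i$ from $s_i$ to $t'_i\in T$, the placements are collision-free at every time, and $\pi_{\U}(1)=(t'_1,\ldots,t'_m)$ is a permutation of $T$ because $\sigma_T$ is one; hence $\{\pi_1,\ldots,\pi_m\}$ satisfies every clause of the definition of an unlabeled path for $\U$, so $\U$ is solved.

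I do not expect a genuine obstacle: the content of the statement is exactly the bookkeeping remark that an unlabeled path already pins down a robot-to-target bijection at time $1$, so ``every robot reaches some target and all targets are covered'' is the same as ``every robot reaches a prescribed target'' after relabeling the targets by the induced permutation. The only point meriting an explicit line is the distinctness of the endpoints $\pi_i(1)$ when passing from an unlabeled path to a permutation, which follows from $\pi_{\U}(1)$ being a configuration together with the stipulation that $\pi_{\U}(1)$ is a permutation of $T$.
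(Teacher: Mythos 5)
Your proof is correct: both directions are exactly the endpoint-bookkeeping argument the paper has in mind, and the paper in fact states Observation~\ref{obs} without any written proof precisely because it regards this relabeling of targets via the bijection induced at $\theta=1$ as immediate from the definitions. Your explicit remark that the endpoints $\pi_1(1),\ldots,\pi_m(1)$ are distinct (since $\pi_{\U}(1)$ is a configuration and a permutation of $T$) is the only detail worth spelling out, and you handle it correctly.
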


        \begin{lemma}\label{lem:equiv}
            Let $\C_1,\ldots,C_g$ be a collection of configurations sampled by {\algub} in the preprocessing stage. Denote by $G_{\text{PRM}}$ the PRM roadmap that is induced by the collection of PRM samples $\sigma_{\text{PRM}}(C_1),\ldots,\sigma_{\text{PRM}}(C_g)$. Suppose that there exists a permutation $\sigma_T$ for which the PRM algorithm, with the roadmap $G_{\text{PRM}}$, finds a solution for the query $\sigma_I(S),\sigma_T(T)$ (where $\sigma_I$ is the identity permutation). Then, \algub will successfully find a solution for the query $(S,T)$.
        \end{lemma}

        \begin{proof}
            Denote by $\sigma_I(S)=\sigma_I(C_0), \sigma_{\text{PRM}}(C_1),\ldots, \sigma_{\text{PRM}}(C_{\ell-1}), \sigma_T(C_{\ell})=\sigma_T(T)$, the path that was found by the PRM roadmap $G_{\text{PRM}}$ after connecting the query $\sigma_I(S), \sigma_T(T)$. Thus, the multi-robot local planner successfully connected every pair of consecutive ordered configurations $\sigma_{\text{PRM}}(C_i),\sigma_{\text{PRM}}(C_{i+1})$ along the path (the same applies to the ends of the path). By Assumption~\ref{assume:basic}, we deduce that the connection generator successfully connects $C_i,C_{i+1}$. Thus, $C_i,C_{i+1}$ are connected in the roadmap $\H$ in the \algub algorithm.\qed
        \end{proof}

        Using this connection between PRM and UBASIC we show that the latter is probabilistically complete.

        \begin{theorem}\label{thm:complete_upump}
            Let $\U=(R,S,T)$ be an unlabeled problem for which there is a solution. Then there exist constants $a>0,g_0\in\dN$, such that the \algub algorithm with $g>g_0$ samples will find a solution with probability at least $1-e^{-ag}$.
        \end{theorem}
        \begin{proof}
            By Observation~\ref{obs}, there exists a permutation $\sigma_T$ for which there is a solution to the fully colored problem $\L=\{\U_1,\ldots, \U_m\}$, where $\U_i=(r_i,s_i,t'_i)$ and $\sigma_T(T)=(t'_1,\ldots,t'_m)$. Let $C_1,\ldots, C_g$ be the collection of the $g$ configurations sampled by \algub. By Theorem~\ref{thm:prm_complete}, the PRM algorithm, with the roadmap induced by the samples $\sigma_{\text{PRM}}(C_1),\ldots,\sigma_{\text{PRM}}(C_g)$, will find a solution for the query $\sigma_I(S),\sigma_T(T)$, with probability at least $1-e^{-ag}$. If the latter occurs, then by Lemma~\ref{lem:equiv}, \algub finds a solution as well. Thus, \algup finds a solution with probability at least $1-e^{-ag}$.\qed
        \end{proof}

        \noindent\textbf{Remark.} We mention that for the purpose of experiments (Section~\ref{sec:results}) we use a more efficient version of UPUMP (and KPUMP) that similarly to the PRM connects only nearby samples, where distance between configuration-samples is measured using Hausdorff distance.

    \subsection{Extending Completeness to UPUMP}
        We force \algup to generate a roadmap that simulates a run of the \algub, by modifying Assumption~\ref{assume:basic}. Recall, that in the \algup algorithm, the connection generator is applied on pumped configurations (and not configurations, as in \algub).

        Let $V=\{v_1,\ldots,v_n\}$ be a pumped configuration. Denote by $V(m)$ the configuration that consists the first $m$ elements of $V$.
        \begin{assumption}\label{assume:pump}
            Let $V,V'$ be two pumped configurations. Suppose that there exist two permutations, $\sigma,\sigma'$, for which the multi-robot local planner finds a path for the input $\sigma(V(m)),\sigma'((V'(m))$. Then, upon the application of the connection generator on the input pumped configurations $V,V'$, it must return at least one path $\pi_{V,V'}$, that is a solution to the unlabeled problem $\U'=(V,V')$.
        \end{assumption}

        Under this Assumption~\ref{assume:pump}, we extend Lemma~\ref{lem:equiv} for the \algup algorithm.

        \begin{lemma}\label{lem:equiv}
            Let $G_1,\ldots,G_g$ be a collection of pebble graphs sampled by {\algup} in the preprocessing stage, where $G_i=(V_i,E_i)$, and $V_i$ is a pumped configuration. Denote by $G_{\text{PRM}}$ the PRM roadmap that is induced by the collection of PRM samples $\sigma_{\text{PRM}}(V_1(m)),\ldots,\sigma_{\text{PRM}}(V_g(m))$. Suppose that there exists a permutation $\sigma_T$ for which the PRM algorithm, with the roadmap $G_{\text{PRM}}$, finds a solution for the query $\sigma_I(S),\sigma_T(T)$. Then, \algup will successfully find a solution for the query $(S,T)$.
        \end{lemma}

        The proof is trivial, and hence omitted. The following corollary immediately follows.

        \begin{corollary}
            Let $\U=(R,S,T)$ be an unlabeled problem for which there is a solution. Then there exist constants $a>0,g_0\in\dN$, such that the \algup algorithm with $g>g_0$ samples of pebble graphs will find a solution with probability at least $1-e^{-ag}$.
        \end{corollary}

    \subsection{Reinforcing the Connection Generator}
        As mentioned earlier, in its current state (as described in Section~\ref{sec:con}) the connection generator does not fulfil the requirement of Assumption~\ref{assume:basic}. Thus, a modification of the component is required if we wish guarantee the correctness of Theorem~\ref{thm:complete_upump}.

        We describe a simple alternative implementation of the connection generator component that is based on \emph{integer programming} (IP) and guarantees to find a connection if one exists, thus fulfilling Assumption~\ref{assume:basic}.

        Recall that in \algub the connection generator is given two configurations $V=\{v_1,\ldots,v_m\}$, $V'=\{v'_1,\ldots,v'_m\}$. In addition, recall that $\R=\{(v,v')|v\in V,v'\in V',\;\pi_{v,v'}\neq\bot\}$ is the set of all pairs of elements from $V,V'$ for which the local planner successfully generated a path. To every pair $(v,v')\in\R$ we assign the boolean variable $x_{v,v'}\in\{0,1\}$ that indicates whether the respective path is selected for the connection. Our goal is to find $m$ non-interfering pairs. This results in the following two constraints.
        \begin{enumerate}
            \item If $(v,v'),(u,u')\in \R$ \emph{interfere} then $x_{v,v'}+ x_{u,u'}\leq 1$.
            \item $\sum_{(v,v')\in \R}x_{v,v'}=m$.
        \end{enumerate}

        We mention that although the problem of integer programming is known to be NP-hard~\cite{p-cc94}, in practice these problems can be solved efficiently using various software packages, e.g.,~\cite{cplex07}.

\rsecbefore
\begin{figure*}[!htp]
    \rimgbefore
            \centering
            \setlength{\tabcolsep}{3pt}
            \begin{tabular}{ccc}
                \subfloat{\includegraphics[width=0.3\columnwidth]{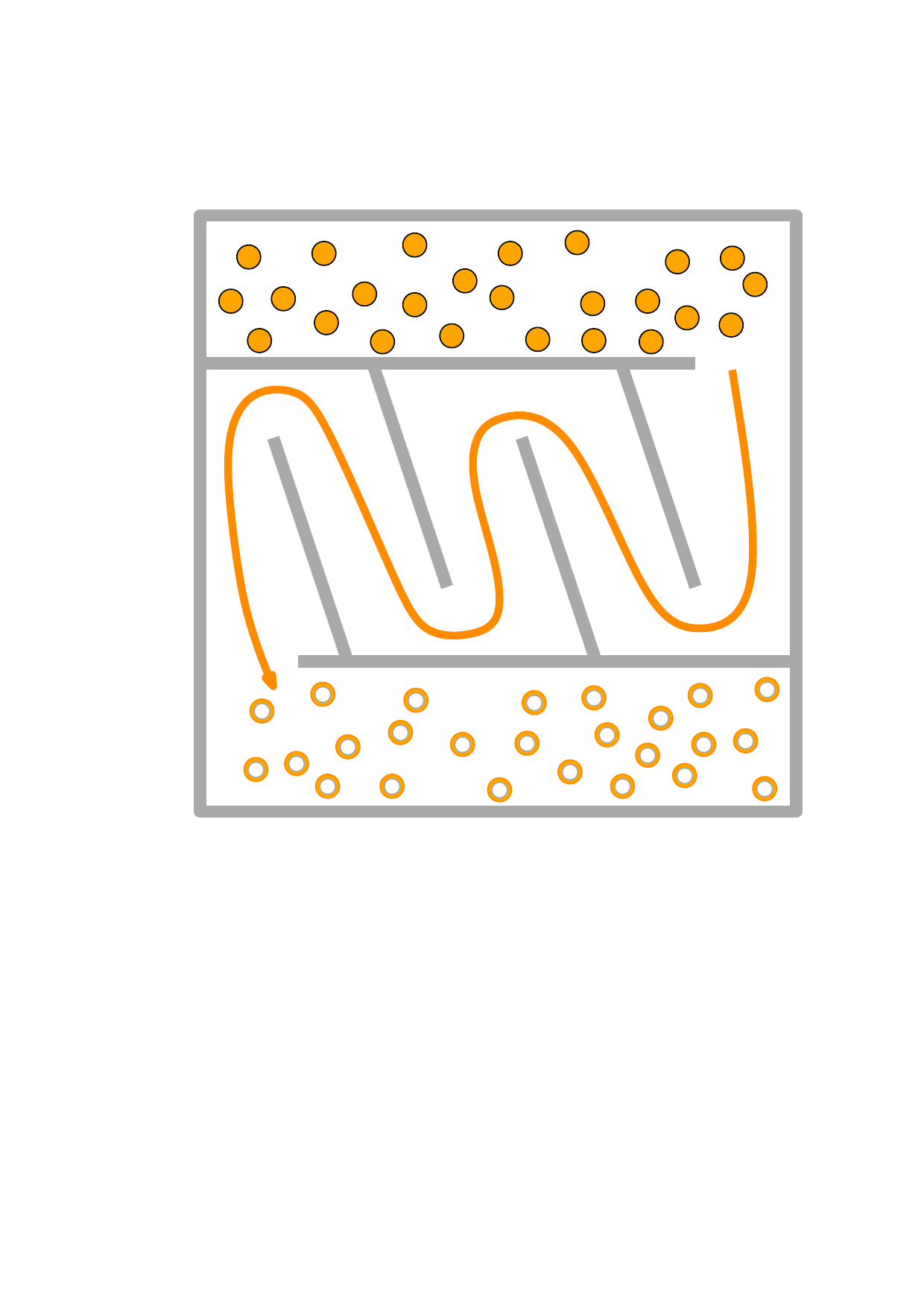}} &
                \subfloat{\includegraphics[width=0.3\columnwidth]{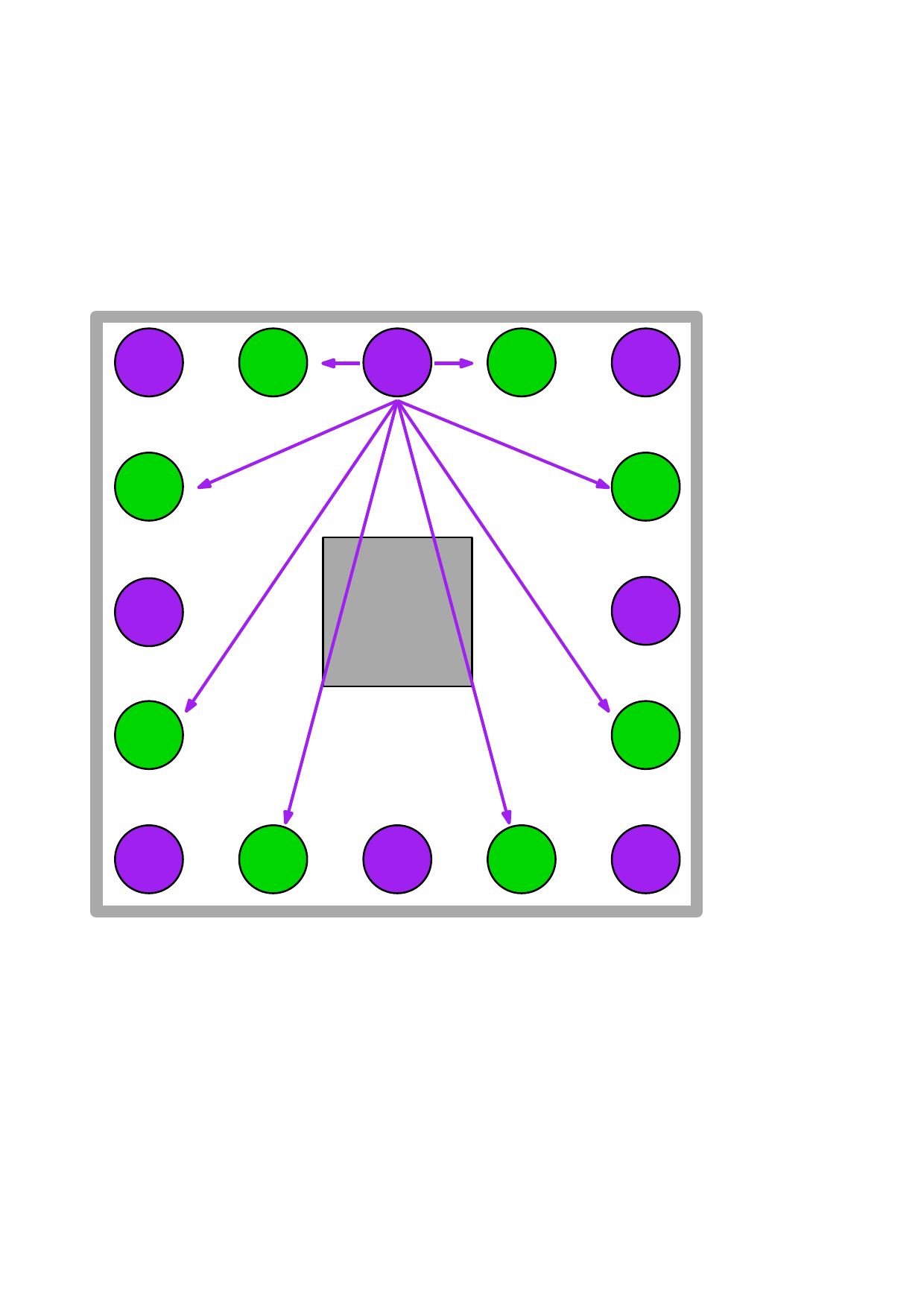}} &
                \subfloat{\includegraphics[width=0.3\columnwidth]{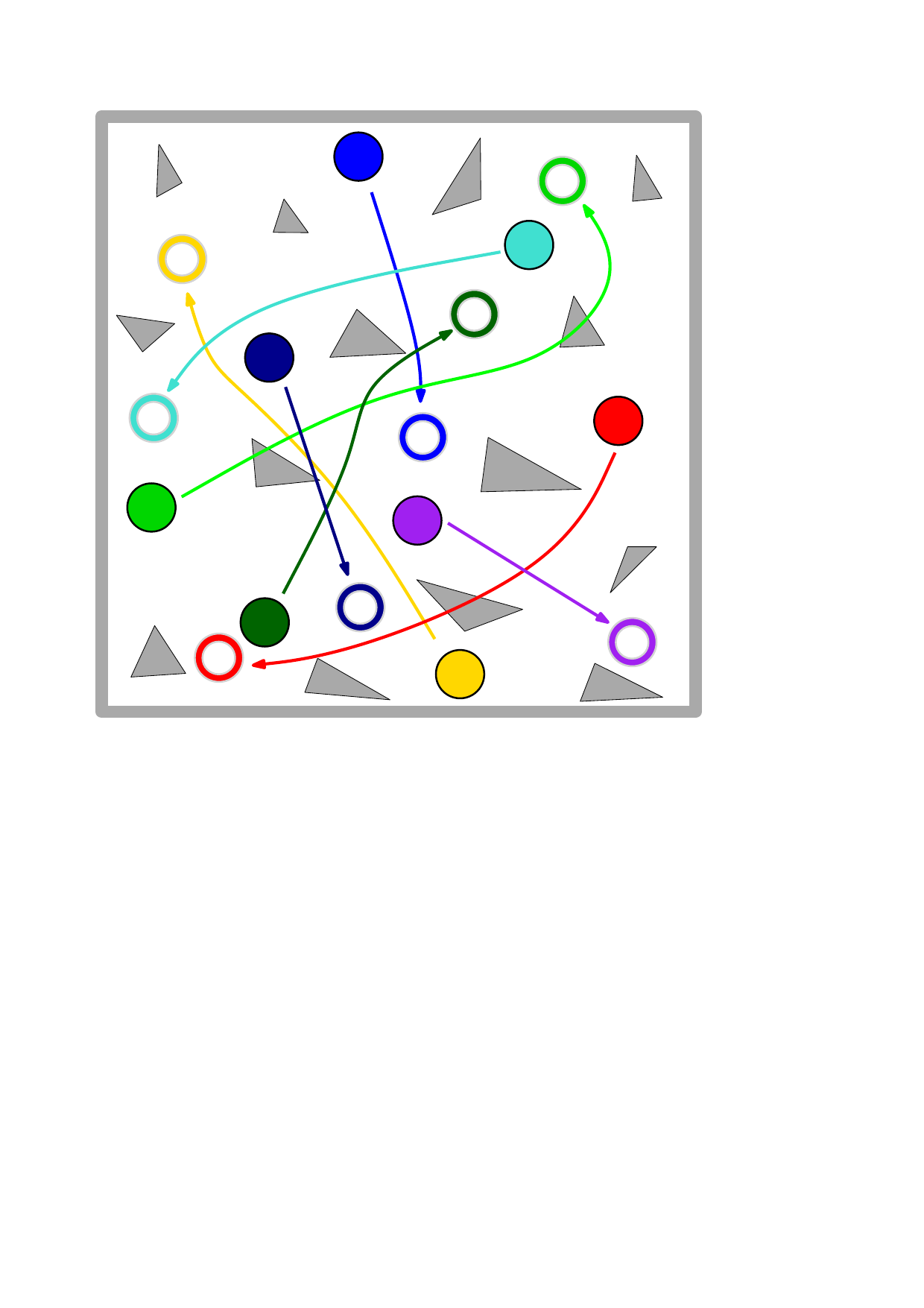}}\\
                (a) Unlabeled&
                (b) $2$-Color&
                (c) Fully-Colored: Decoupled
            \end{tabular}
            \begin{tabular}{cc}
                \subfloat{\includegraphics[width=0.4\columnwidth]{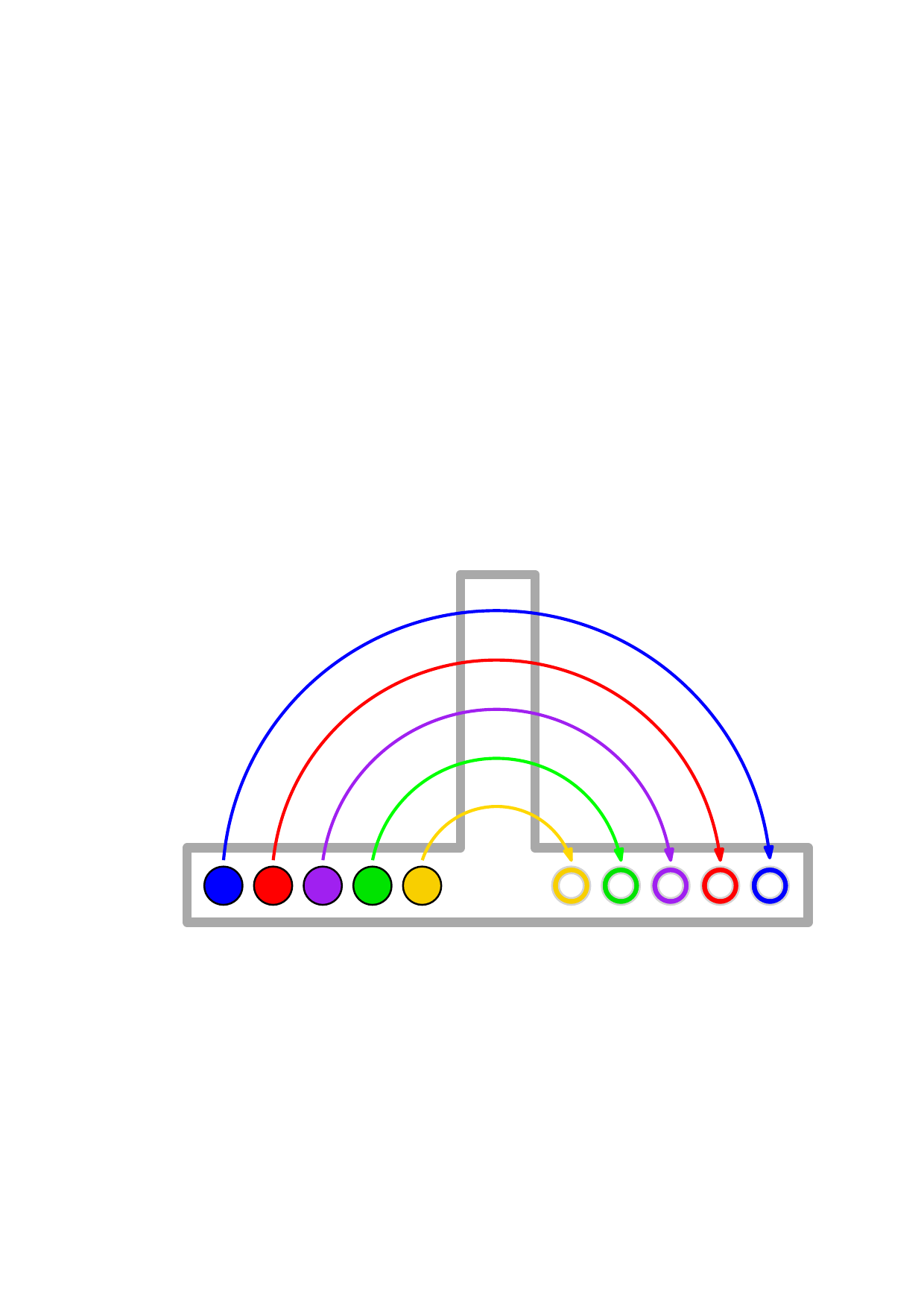}} &
                \subfloat{\includegraphics[width=0.3\columnwidth]{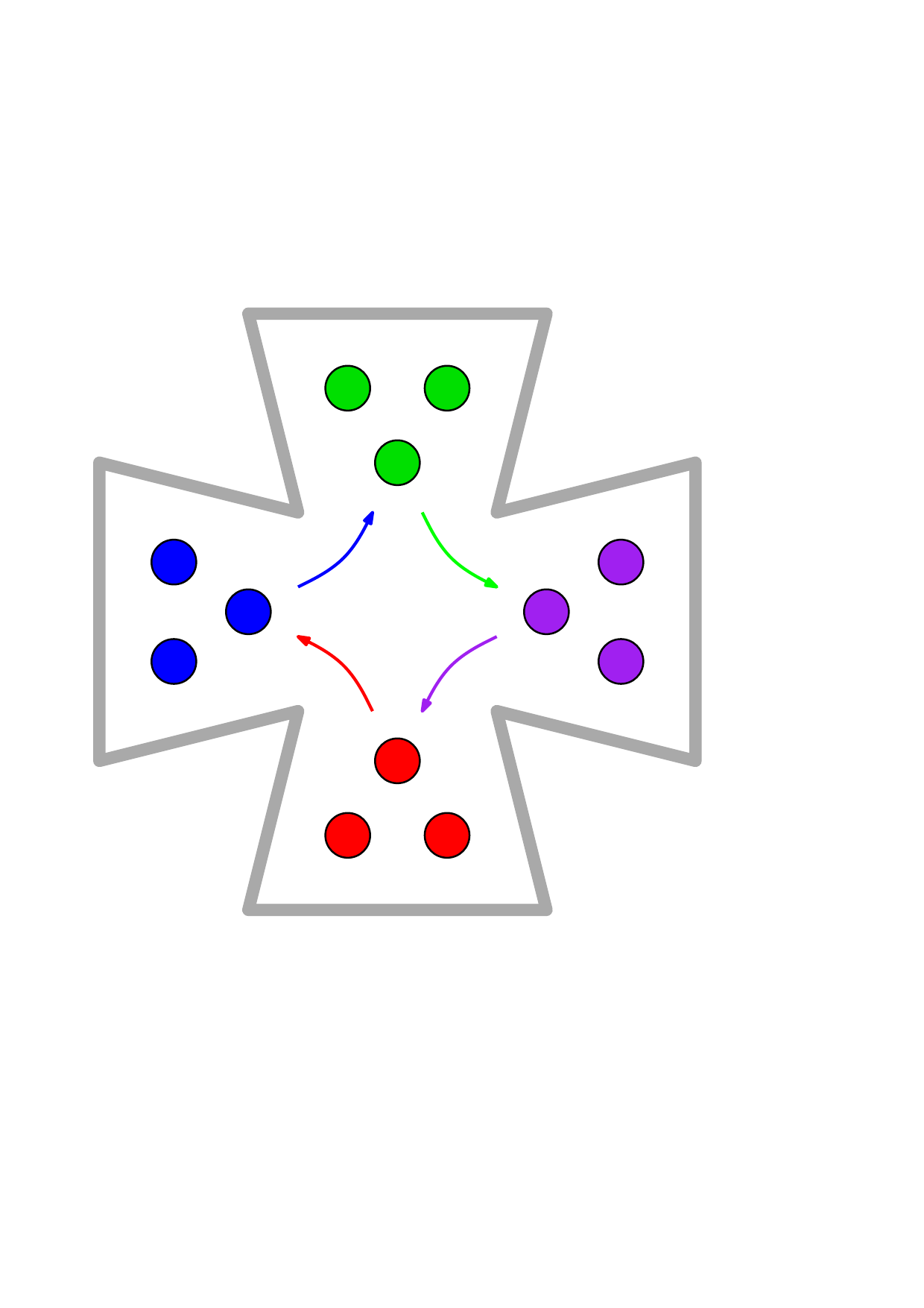}}\\
                (d) Fully-Colored: Coupled&
                (e) $4$-Color
            \end{tabular}
            \vspace{-5pt}
            \caption{Scenarios for the case of disc robots. Start positions of the robots are indicated by discs while target positions are illustrated as circles in respective colors (unless otherwise indicated). (a) Unlabeled scene with twenty five robots. (b) $2$-Color scene; the two groups are required to switch positions. (c) Fully-colored scene with eight robots. (d) Fully-colored scene with five robots. (e) $4$-Color scene; every group has to move in a clockwise manner to the next room. }
            \label{fig:scenarios} 
        \end{figure*}
\section{Experimental Results}\label{sec:results}\rsecafter
    We describe experimental results for the case of disc robots and polygonal robots translating amidst polygonal obstacles in the plane. We show results for several challenging scenarios and compare the performance of \algkp with two other \sbs algorithms. Specifically we compare \algkp with the PRM implementation of the OOPSMP package~\cite{oopsmp} on inputs of the fully-colored problem. For other inputs we use a basic \sbs algorithm for the $k$-color problem called \algkb, described later on.

    \algkp was implemented in C++ using CGAL Arrangements~\cite{fhw-ca} and the Boost Graph Library (BGL)~\cite{sll-bgl}. The code was tested on a PC with Intel i7-2600 3.40GHz processor with 8GB of memory, running a Windows 7 64-bit OS. For the implementation of the \emph{local planner} a straight-line connection strategy was used. This strategy attempts to move the robot along a straight line drawn between two positions.\vspace{5pt}

    \noindent\textbf{Parameters of \Balgkp.}
    The algorithm has three parameters that affect its performance: $g$ describes the number of the sampled pebble graphs in the \algup algorithm, or the number of composite pebble graphs in \algkp; $q$ is the number of connections produced by the connection generator between two samples; $\mu$ is the maximal number of \sr configurations that one sample comprises, i.e., for every sampled pumped configuration $\dV=\{V_1,\ldots,V_k\}$ it holds that $\sum |V_i|\leq \mu$.
    The value of the latter parameter depends on the input problem.
    For unlabeled problems, increasing $\mu$ results in increased connectivity of the resulting pebble graphs. Thus, it will be beneficial that the pumped configurations will be as large as possible (limited by the topology of the scenario).
    On the other hand, in $k$-colored problems where $k>1$ the value $\mu$ has to be set more carefully as an excessively high value of $\mu$ will reduce the connectivity of the pebble graphs.
    This stems from the fact that a \sr path produced by the edge planner has to avoid collision with robots from other groups. Consequently, as the value of $\mu$ grows it becomes harder to connect \sr configurations using an edge planner.
    \begin{table}
            \centering
            \caption{\label{table:res} Results for selected scenarios. Scenarios of polygonal robots are indicated by (*).}
                \begin{tabular}{|c||c|c|c||c|c|c||c|}
                    \hline
                    \multicolumn{1}{|c||}{\multirow{2}{*}{ }} & \multicolumn{3}{c||}{Properties} & \multicolumn{3}{c||}{Parameters} & \multicolumn{1}{c|}{\multirow{2}{*}{Time}} \\
                    \cline{2-7}
                     &$k$&$m$&$M$&$g$&$q$&$\mu$&\\
                    \hline
                    \hline
                    (a)&1&25&25&2&5000&150&23\\
                    (b)&2&8&16&50&1000&40&20\\
                    (c)&8&1&8&100&150&32&213\\
                    (d)&5&1&5&50&100&25&2\\
                    (e)&4&3&12&40&250&28&33\\
                    \hline
                    (a*)&2&4&8&20&250&16&13\\
                    (b*)&2&5&10&30&500&20&54\\
                    (c*)&3&1,4&6&30&250&20&11\\
                    (d*)&3&4&12&40&250&30&450\\
                    \hline
                \end{tabular}
                \label{table:res}
                \vspace{-15pt}
        \end{table}

        \begin{figure*}[!hbp]
    \rimgbefore
            \centering
            \setlength{\tabcolsep}{3pt}
            \begin{tabular}{cc}
                \subfloat{\includegraphics[width=0.35\columnwidth]{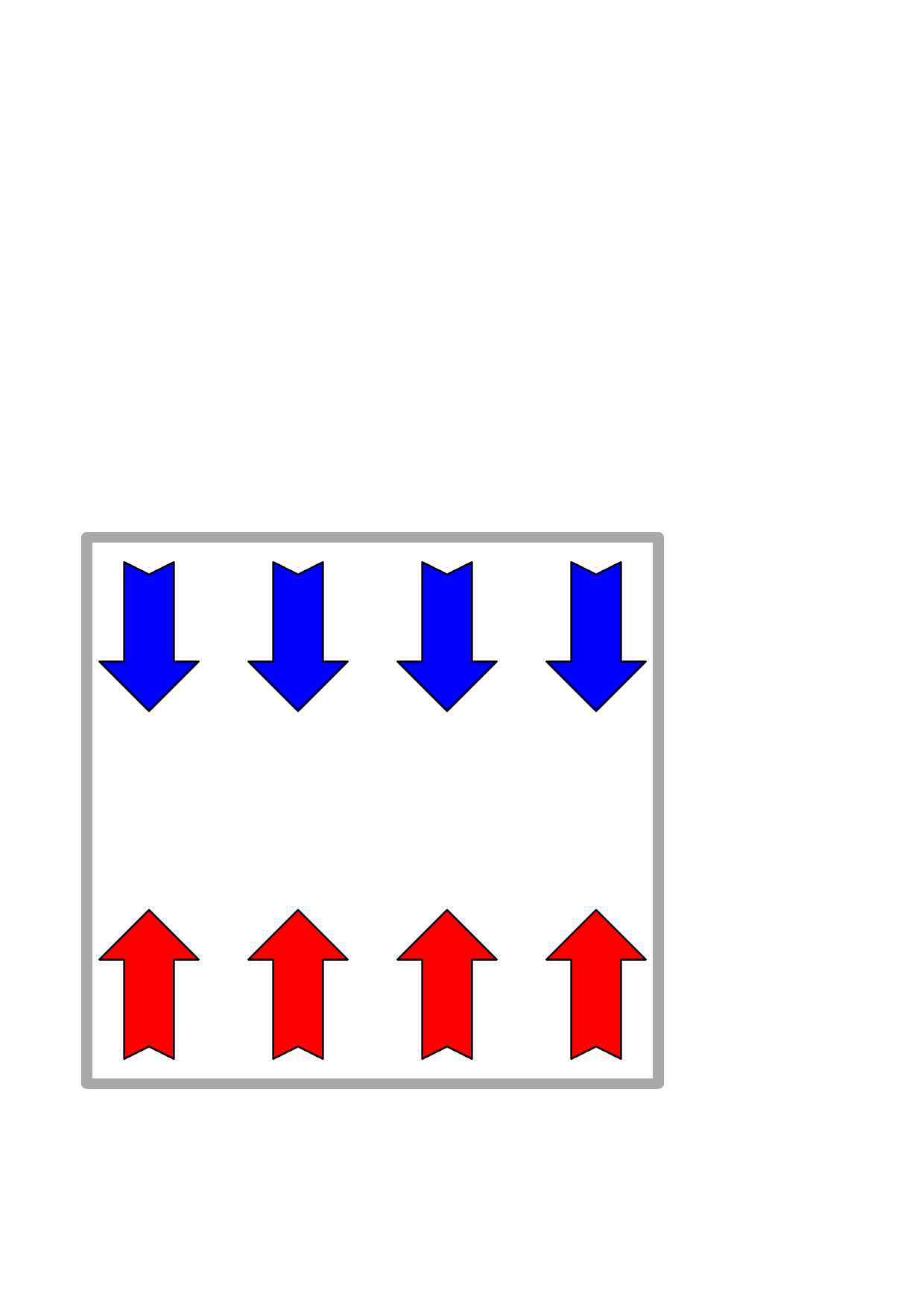}} &
                \subfloat{\includegraphics[width=0.35\columnwidth]{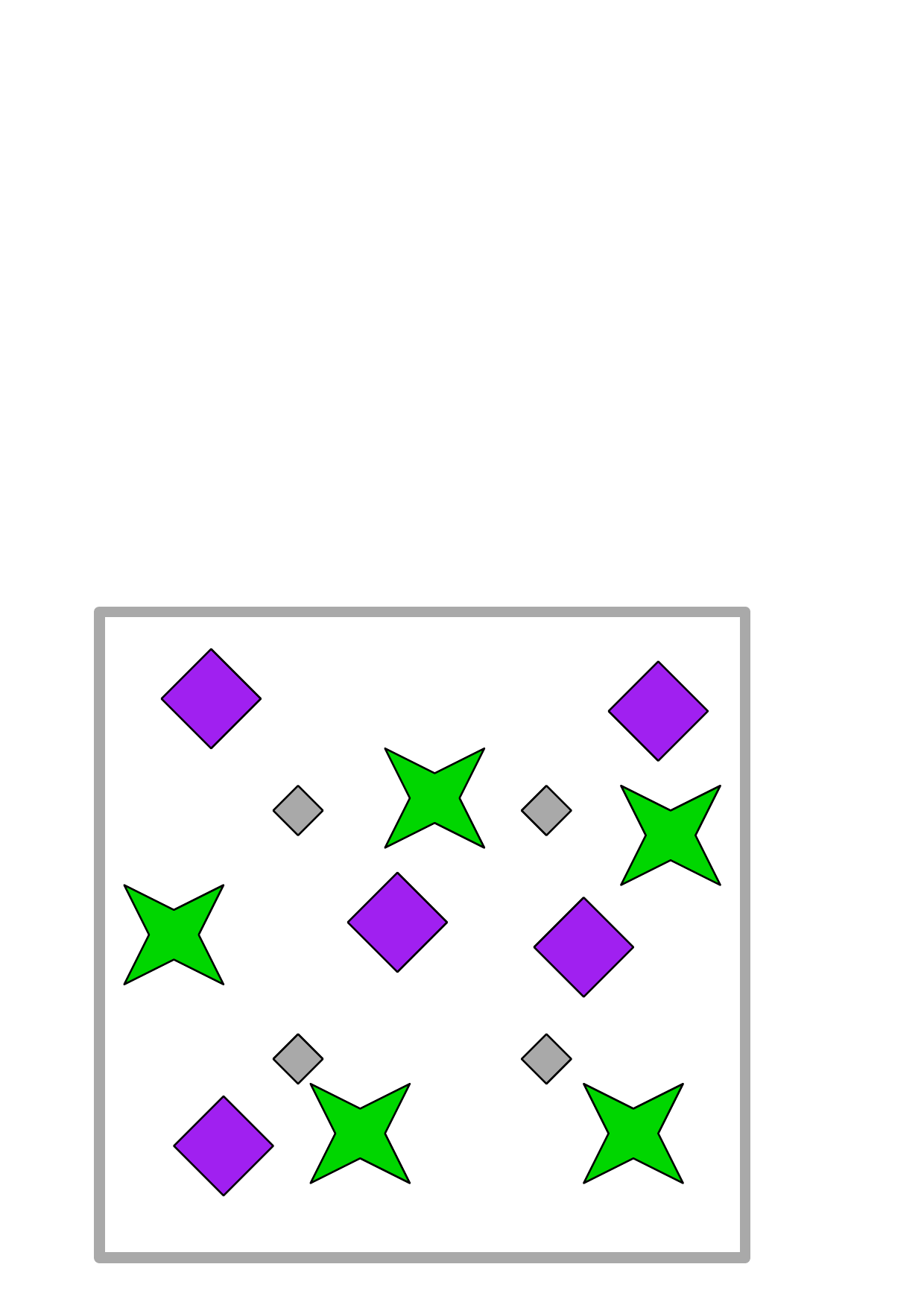}} \\
                (a*) $2$-Color&
                (b*) $2$-Color \\
                \subfloat{\includegraphics[width=0.35\columnwidth]{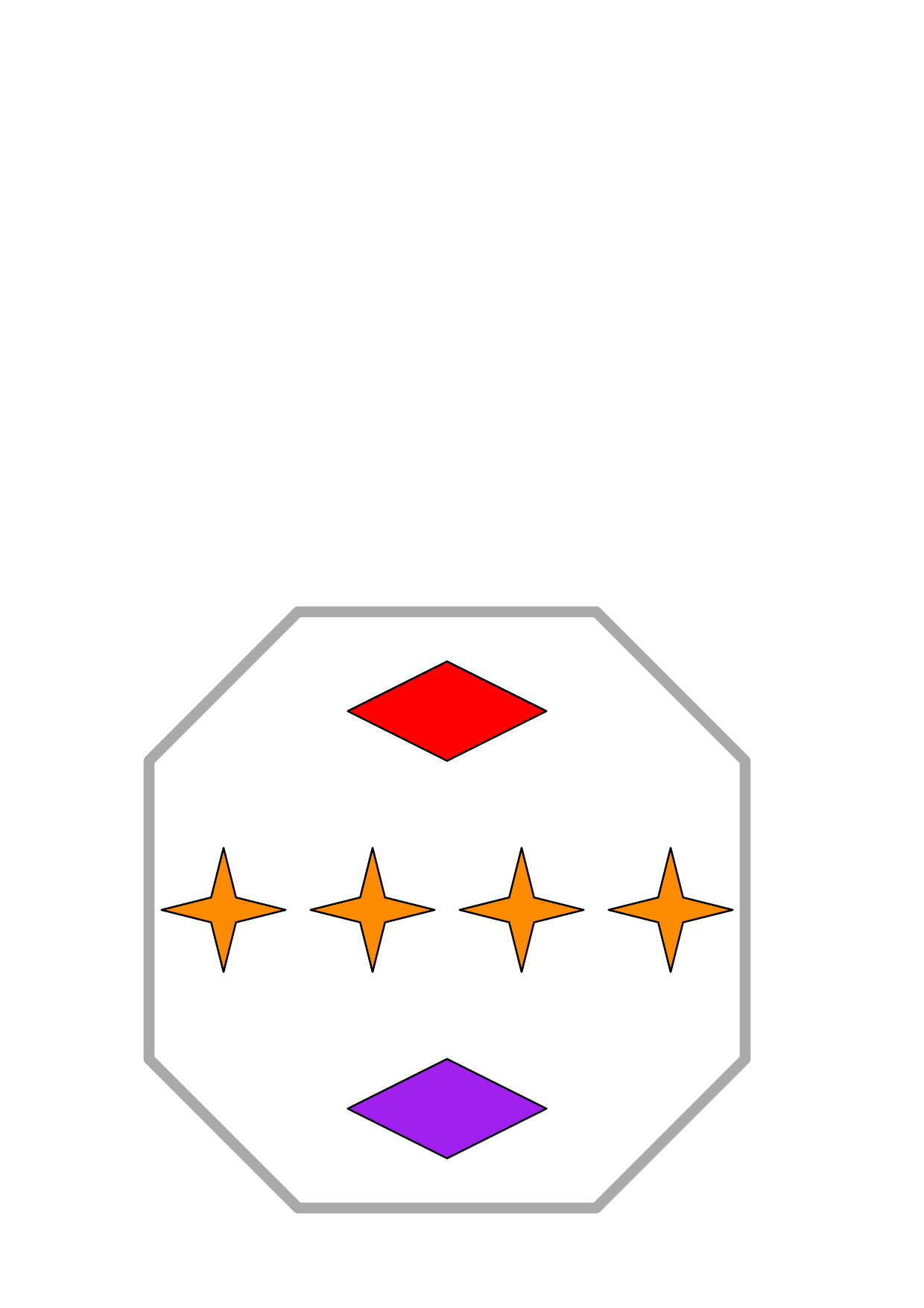}} &
                \subfloat{\includegraphics[width=0.35\columnwidth]{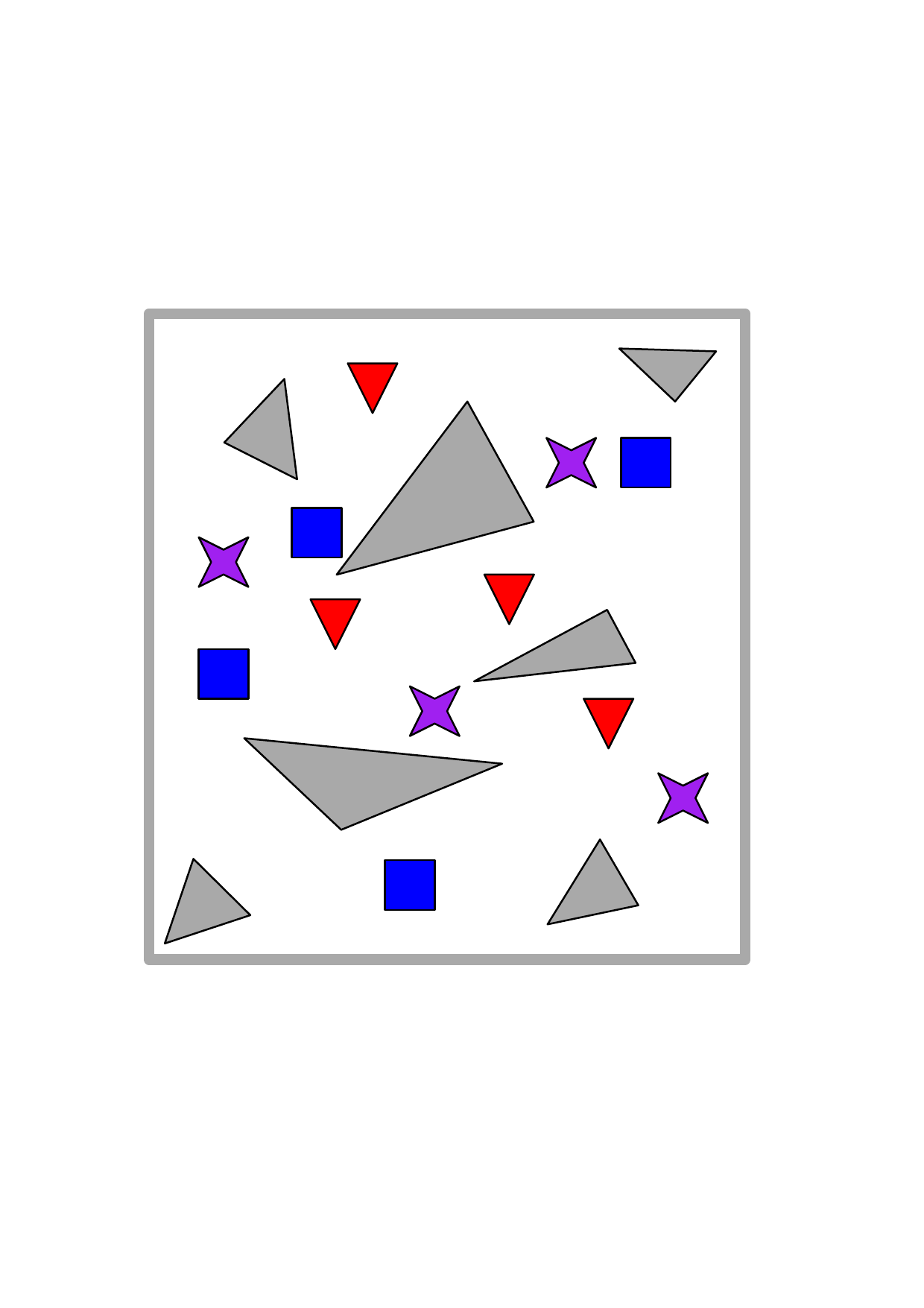}}\\
                (c*) $3$-Color&
                (d*) $3$-Color
            \end{tabular}
            \vspace{-5pt}
            \caption{Scenarios for the case of translating polygonal robots. Target positions are not indicated in the figures to avoid unnecessary clutter. (a*) $2$-Color scene; the two groups of robots (upward and downward facing arrows) need to exchange positions.  (b*) $2$-Color scene with rectangular and star-shaped robots; the start and target positions were randomly placed. (c*) $3$-Color scene; the two rhombus-shaped robots, which belong to different groups need to exchange positions, while the star-shaped robots need to return to their start positions. (d*). $3$-Color scene with randomly placed start and target positions. }
            \label{fig:scenarios2}
        \end{figure*}

    \noindent\textbf{Test Scenarios.}
    The scenarios for the case of disc robots are illustrated in Figure~\ref{fig:scenarios} and represent a variety of challenging problems. The unlabeled problem in (a) involves the motion of a large collection of robots.
    Scenarios (b) and (e) describe $2$-color and $4$-color problems comprising a large number of robots as well.
    Although scenarios (c), (d) do not involve as many robots, they are nevertheless challenging. This range of problems demonstrate the work of the various components of the \algkp algorithm. In the first three scenarios the resulting pebble graphs have a low number of connected components due to the low value of $k$ (as in scenario (a)) or high clearance from the obstacles (as in (e)). Therefore, large portions of the resulting paths involve the motions of the robots on paths induced by pebble problems. While the generated graphs in scenarios (c) and (d) have low connectivity, \algkp still performs well---due to the use of the connection generator component.
    Additional scenarios, that demonstrate the performance of the algorithm for the case of translating polygonal robots are illustrated in Figure~\ref{fig:scenarios2}.

    The results of running \algkp for specific parameters are given in Table~\ref{table:res}.
    In addition to the parameters mentioned above, the table contains the values $k$ for the number of colors, $m$ the number of robots in every color and $M$ the total number of robots.
    The running times are given in seconds and represent the overall duration of the preprocessing and query phases, for a single query. We mention that the majority of running time was spent on connection of pebble graphs (using the connection generator), and thus we chose to present only the overall running time.
    The parameters used by \algkp and other algorithms, mentioned later on, were manually optimized over a concrete set.
    A failure was declared when an algorithm was unable to solve a scenario for more than three runs out of five. \vspace{2.5pt}

    \noindent\textbf{Comparison with Other Algorithms.}
    The first part of the comparison involves solely inputs of the fully-colored problem.
    We compare \algkp with the implementation of PRM provided by OOPSMP, which, by our experience, is very efficient.
    This algorithm is designed for solving fully-colored \mrmp problems.
    While OOPSMP required 100 seconds to solve scenario (d), \algkp managed to solve it in 1.9 seconds.
    Scenario (c) proved to be even more challenging for OOPSMP, which failed to solve it, even when was given 5000 seconds of preprocessing time, whereas \algkp solved in 213.7 seconds.

    In order to provide a more informative comparison, we ran both algorithms on scenarios (c),(d), only that now  we increased the difficulty of these scenarios gradually---incrementally introducing the robots, i.e., starting with a single robot and adding the others one by one, as long as OOPSMP succeeded solving the new inputs in reasonable time.
    In this case OOPSMP was able to solve scenario (c) with five robots, while the case of six robots was out of its reach (when given 5000 seconds of preprocessing time). The speedup of \algkp compared to OOPSMP for this new range of scenarios is depicted in Figure~\ref{fig:oopsmp} along with an additional test case (``decoupled-simple''), which is a simpler variant of scenario (c) with some of the obstacles removed and the radius of the robots is decreased. The latter was designed to test the performance of OOPSMP on problems involving a higher number of robots.

    \begin{figure*}[b]
                \centering
                \setlength{\tabcolsep}{1pt}
                \begin{tabular}{cc}
                    \subfloat{\includegraphics[width=0.5\columnwidth]{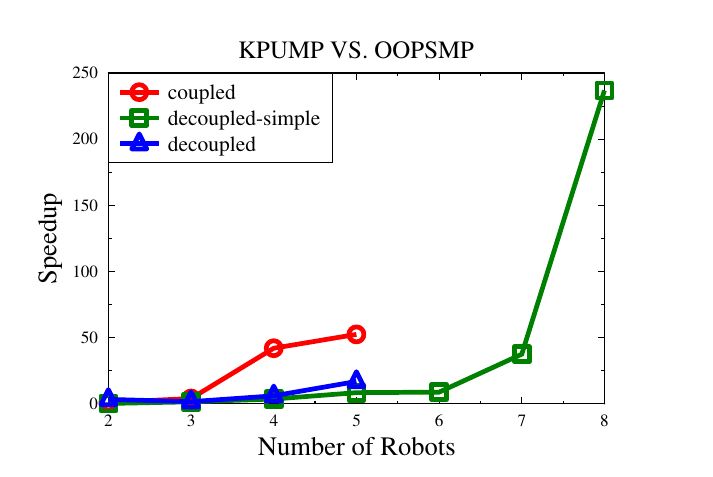}} &
                    \subfloat{\includegraphics[width=0.5\columnwidth]{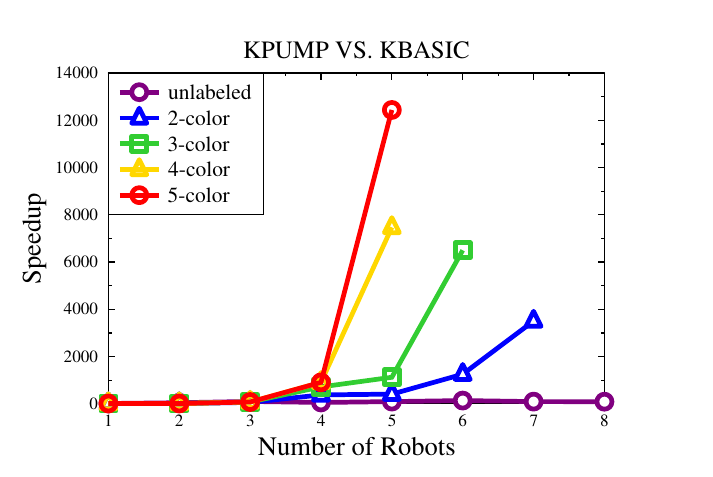}}
                \end{tabular}
                \vspace{-15pt}
                \caption{\label{fig:oopsmp}Comparing \algkp with OOPSMP/PRM and KBASIC}
                \rimgafter
        \end{figure*}

    As we are not aware of any other algorithms for the $k$-color problem, we designed a basic algorithm to compare \algkp with.
    This algorithm, which we call \algkb, as a special case of \algkp that samples configurations, instead of pumped configurations, and can be viewed as an extension of PRM for the $k$-color case (for more details, see Section~\ref{sec:complete}. The entire set of scenarios (a)-(e),(a*)-(d*) proved to be too challenging for \algkb, which spent at times more than ten minutes.
    Similarly to the previous comparison we designed a set of simple test scenarios.
    Specifically, scenario (e) was converted into five $k$-color problems for $1\leq k\leq 5$ by partitioning the robots into $k$ groups such that a robot number $i$ was assigned to the group $i\;\text{mod}\;k$. Then, as in the previous comparison, the robots were introduced incrementally.
    Figure~\ref{fig:oopsmp} depicts the speedup of \algkp compared with \algkb for each of the $k$-color problems.
    This shows that \algkp outperforms \algkb in every possible setting, be it a $k$-color, unlabeled or fully-colored problem.

\rsecbefore
\section{Discussion and Further Work}\label{sec:discussion}\rsecafter
    In this section we discuss the various properties of the \algkp algorithm and novelties it encompasses, as well as directions for future research.

    \subsection{Shortcomings of the Composite Robot Approach}
    The traditional composite robot approach to the \mr problem treats the group of robots as one composite robot whose configuration space is the Cartesian product of the configuration spaces of the individual robots.
    With this approach, single-robot tools, such as \sbs algorithms, can be used to solve \mr problems.
    For instance, this technique is used in the software packages OOPSMP and OMPL~\cite{ompl,oopsmp} where PRM is applied to the fully-colored problem, and in the \algkb algorithm discussed above.
    Paths generated by this approach usually force the robots to move simultaneously from one placement to the other, where none of the robots remains in the same position while the others are moving.
    Although simultaneous movement of the robots is necessary in some cases, algorithms that consider only this type of movement may not fully exploit the properties of the \mr motion planning problem, and thus suffer from poor running time.

    Given collision-free placements for all the robots it is usually possible to move some of the robots to different placements without altering the placements of the rest of the robots, i.e., those robots remain still.
    For instance, consider a configuration $C=\{c_1,\ldots,c_m\}$ for some unlabeled problem $\U$.
    Unless the workspace is extremely tight, another configuration $C'$ can be derived from $C$ where only $c_1$ is moved to $c'_1$.
    Moreover, connecting two such configurations by a path requires only a \sr collision-free path for which the moving robot does not collide with the other robots placed in $c_2,\ldots,c_m$.
    In contrast, the connection of two ``unrelated'' configurations by a path imposes much harder constraints---$m$ single-robot collision-free paths have to be created and in addition, robots moving along those paths must not collide with each other.

    \algkp utilizes this observation by restricting the movements of the robots along certain path sections---induced by pebble problems---to motions of individual robots. We emphasize that \algkp does not preclude simultaneous movements of robots when necessary, specifically on path sections where the robots move from one pebble graph to the other along paths generated by the connection generator. We mention that sequential movements may result in longer paths, but this is a price that we are willing to pay, as we are able to cope with large groups of robots.

    \subsection{Amplification of Samples}
    Pumped configurations that are sampled by \algkp, and the resulting pebble graphs, are fairly simple structures which require only little effort to generate.
    Yet, using the transformation to pebble problem, these samples are amplified to describe not only placements and paths for single robots, but also to represent an incredible amount of paths and positions for all the robots in a given problem.
    However, this information is not represented explicitly and only little storage space is required to represent a pebble graph. In addition, a small number of configurations must be stored.
    Specifically, these are \confs through which the pebble graphs connects to other graphs.
    Such \confs are selected by the \emph{connection generator}.
    Similarly, this component does not require an explicit representation of all the configurations represented by the pebble graph.
    Furthermore, continuing the theme presented here that one action leads to a large number of outcomes, namely, a sample of a pumped configuration results in many configurations, a path generated by a connection generator not only connects two configurations from the two pebble graphs, but also a large number of \confs from them, which are not necessarily directly connected.
    Thus, these properties enable \algkp to generate a variety of configurations and motions of the robots, using only few samples. To reproduce this variety by \algkb one must generate far more samples.

    An additional advantage of the use of pebble graphs lies in the fact that they can be connected more easily than two configurations, when a powerful component as the connection generator is at hand.
    Using this component, \algkp succeeds in solving difficult scenarios even when the generated pebble graphs suffer from low connectivity, as in scenarios (c) and (d).

    \subsection{Further Work}
    Our immediate future goal is to investigate the completeness of the original formulation of the algorithm, i.e., using the connection generator algorithm that appears in Section~\ref{secUnlabeled}. In addition, it would be interesting to apply KPUMP to problems that involve more complex robots (e.g., rotating and translating polygons in the plane, multi-link robots). Additionally, it would be advantageous to reduce the number of parameters on which the algorithm relies.

    The experiments carried in this work suggest that the $k$-color problem, for various values of $k$, is less challenging than a fully-colored problem (with the same number of robots). Hence, it is an interesting problem to investigate the computational complexity of the unlabeled and $k$-color problems.

    There are many interesting variants of multi-robot motion planning, where we believe our approach can be applied. Some interesting applications will necessitate adaptation of KPUMP as described here since these problems have additional ingredients, such as distributed behaviour.   We mention the problem of flocking~\cite{r-fhs87,bla-rbf02} and crowd simulation~\cite{blm-rvo,tcp-cc06} that have some relation to the problem of multi-robot motion planning.



\bibliographystyle{spmpsci}

\begin{thebibliography}{10}
\providecommand{\url}[1]{{#1}}
\providecommand{\urlprefix}{URL }
\expandafter\ifx\csname urlstyle\endcsname\relax
  \providecommand{\doi}[1]{DOI~\discretionary{}{}{}#1}\else
  \providecommand{\doi}{DOI~\discretionary{}{}{}\begingroup
  \urlstyle{rm}\Url}\fi

\bibitem{avbsv-mpfmr}
Aronov, B., de~Berg, M., van~der Stappen, A.F., Svestka, P., Vleugels, J.:
  Motion planning for multiple robots.
\newblock Discrete {\&} Computational Geometry \textbf{22}(4), 505--525 (1999)

\bibitem{bla-rbf02}
Bayazit, O.B., Lien, J.M., Amato, N.M.: Roadmap-based flocking for complex
  environments.
\newblock In: Pacific Conference on Computer Graphics and Applications, pp.
  104--115 (2002)

\bibitem{bo-pmpmr}
van~den Berg, J., Overmars, M.: Prioritized motion planning for multiple
  robots.
\newblock In: International Conference on Intelligent Robots and Systems
  (IROS), pp. 430 -- 435 (2005)

\bibitem{bslm-cppmr}
van~den Berg, J., Snoeyink, J., Lin, M., Manocha, D.: Centralized path planning
  for multiple robots: Optimal decoupling into sequential plans.
\newblock In: Robotics: Science and Systems (RSS) (2009)

\bibitem{blm-rvo}
van~den Berg, J.P., Lin, M.C., Manocha, D.: Reciprocal velocity obstacles for
  real-time multi-agent navigation.
\newblock In: International Conference on Robotics and Automation (ICRA), pp.
  1928--1935 (2008)

\bibitem{cdp-rgg}
Calinescu, G., Dumitrescu, A., Pach, J.: Reconfigurations in graphs and grids.
\newblock SIAM Journal on Discrete Mathematics \textbf{22}(1), 124--138 (2008)

\bibitem{clhbkt-prmp}
Choset, H., Lynch, K., Hutchinson, S., Kantor, G., Burgard, G., Kavraki, L.,
  Thrun, S.: Principles of Robot Motion: Theory, Algorithms, and
  Implementations.
\newblock MIT Press (2005)

\bibitem{cplex07}
CPLEX, I.: 11.0 user's manual.
\newblock ILOG CPLEX Division. Incline Village, NV  (2007)

\bibitem{fhw-ca}
Fogel, E., Halperin, D., Wein, R.: CGAL Arrangements and Their Applications: A
  Step-by-Step Guide.
\newblock Geometry and Computing. Springer (2012)

\bibitem{gh-mcpm}
Goraly, G., Hassin, R.: Multi-color pebble motion on graphs.
\newblock Algorithmica \textbf{58}(3), 610--636 (2010)

\bibitem{hh-hmp}
Hirsch, S., Halperin, D.: Hybrid motion planning: Coordinating two discs moving
  among polygonal obstacles in the plane.
\newblock In: Workshop on the Algorithmic Foundations of Robotics (WAFR), pp.
  239--255. Springer (2002)

\bibitem{hss-cmpmio}
Hopcroft, J., Schwartz, J., Sharir, M.: On the complexity of motion planning
  for multiple independent objects; {PSPACE}-hardness of the ``{W}arehouseman's
  problem''.
\newblock International Journal of Robotics Research \textbf{3}(4), 76--88
  (1984)

\bibitem{kf-sbaomp11}
Karaman, S., Frazzoli, E.: Sampling-based algorithms for optimal motion
  planning.
\newblock International Journal of Robotics Research \textbf{30}(7), 846--894
  (2011)

\bibitem{kslo-prm}
Kavraki, L.E., Svestka, P., Latombe, J.C., Overmars, M.: Probabilistic roadmaps
  for path planning in high dimensional configuration spaces.
\newblock IEEE Transactions on Robotics and Automation \textbf{12}(4), 566--580
  (1996)

\bibitem{ompl}
{Kavraki Lab}: The open motion planning library ({OMPL}) (2010).
\newblock {}ompl.kavrakilab.org

\bibitem{k-cpmg}
Kornhauser, D.: Coordinating pebble motion on graphs, the diameter of
  permutation groups, and applications.
\newblock M.{Sc}. thesis, Department of Electrical Engineering and Computer
  Scienec, Massachusetts Institute of Technology (1984)

\bibitem{kms-cpmg}
Kornhauser, D., Miller, G., Spirakis, P.: Coordinating pebble motion on graphs,
  the diameter of permutation groups, and applications.
\newblock In: Foundations of Computer Science (FOCS), pp. 241--250. IEEE
  Computer Society (1984)

\bibitem{l-rert}
Kuffner, J.J., Lavalle, S.M.: {RRT}-{C}onnect: An efficient approach to
  single-query path planning.
\newblock In: International Conference on Robotics and Automation (ICRA), pp.
  995--1001 (2000)

\bibitem{l-pa}
LaValle, S.M.: Planning Algorithms.
\newblock Cambridge University Press (2006)

\bibitem{lls-mpcmr}
Leroy, S., Laumond, J.P., Simeon, T.: Multiple path coordination for mobile
  robots: A geometric algorithm.
\newblock In: International Joint Conference on Artificial Intelligence, pp.
  1118--1123 (1999)

\bibitem{lb-eccmr}
Luna, R., Bekris, K.E.: Efficient and complete centralized multi-robot path
  planning.
\newblock In: International Conference on Intelligent Robots and Systems (IROS)
  (2011)

\bibitem{p-cc94}
Papadimitriou, C.H.: Computational Complexity.
\newblock Addison-Wesley (1994)

\bibitem{oopsmp}
Plaku, E., Bekris, K.E., Kavraki, L.E.: {OOPS} for motion planning: An online
  open-source programming system.
\newblock In: International Conference on Robotics and Automation (ICRA), pp.
  3711--3716. IEEE (2007)

\bibitem{r-fhs87}
Reynolds, C.W.: Flocks, herds and schools: A distributed behavioral model.
\newblock SIGGRAPH Comput. Graph. \textbf{21}(4), 25--34 (1987)

\bibitem{sl-upp}
Sanchez, G., Latombe, J.C.: Using a {PRM} planner to compare centralized and
  decoupled planning for multi-robot systems.
\newblock In: International Conference on Robotics and Automation (ICRA) (2002)

\bibitem{ss-pm3}
Schwartz, J.T., Sharir, M.: On the piano movers' problem: {III}. {C}oordinating
  the motion of several independent bodies.
\newblock International Journal of Robotics Research \textbf{2}(3), 46--75
  (1983)

\bibitem{sll-bgl}
Siek, J., Lee, L.Q., Lumsdaine, A.: Boost graph library.
\newblock http://www.boost.org/libs/graph/ (2000)

\bibitem{so-cppmr}
Svestka, P., Overmars, M.H.: Coordinated path planning for multiple robots.
\newblock Robotics and Autonomous Systems \textbf{23}, 125--152 (1998)

\bibitem{tcp-cc06}
Treuille, A., Cooper, S., Popovic, Z.: Continuum crowds.
\newblock ACM Transactions on Graphics \textbf{25}, 1160--1168 (2006)

\bibitem{wc-cmpp11}
Wagner, G., Choset, H.: M*: {A} complete multirobot path planning algorithm
  with performance bounds.
\newblock In: International Conference on Intelligent Robots and Systems
  (IROS), pp. 3260--3267 (2011)

\bibitem{wmc-ppp12}
Wagner, G., Kang, M., Choset, H.: Probabilistic path planning for multiple
  robots with subdimensional expansion.
\newblock In: International Conference on Robotics and Automation (ICRA), pp.
  2886--2892 (2012)

\end{thebibliography}

\end{document}